\title{Can an AI Agent Safely Run a Government? \\ Existence of Probably Approximately Aligned Policies}
\author{%
  Frédéric Berdoz \\
  ETH Zürich\\
  \texttt{fberdoz@ethz.ch} \\
  \And
  Roger Wattenhofer \\
  ETH Zürich \\
  \texttt{wattenhofer@ethz.ch} \\
}
\begin{document}

\maketitle

\begin{abstract}
    While autonomous agents often surpass humans in their ability to handle vast and complex data, their potential misalignment (i.e., lack of transparency regarding their true objective) has thus far hindered their use in critical applications such as social decision processes. More importantly, existing alignment methods provide no formal guarantees on the safety of such models. Drawing from utility and social choice theory, we provide a novel quantitative definition of alignment in the context of social decision-making. Building on this definition, we introduce \emph{probably approximately aligned} (i.e., near-optimal) policies, and we derive a sufficient condition for their existence. Lastly, recognizing the practical difficulty of satisfying this condition, we introduce the relaxed concept of \emph{safe} (i.e., nondestructive) policies, and we propose a simple yet robust method to safeguard the black-box policy of any autonomous agent, ensuring all its actions are verifiably safe for the society.
    \end{abstract}

\section{Introduction}
\label{sec:introduction}
The deployment of AI systems in critical applications, such as social decision-making, is often stalled by the following two shortcomings: 1) They are \emph{brittle} and usually provide no guarantees on their expected performance when deployed in the real world \cite{cummings2021},  and 2) there is no formal guarantee that the objective they have been trained against, typically a scalar quantity such as a \emph{loss} or a \emph{reward}, faithfully represents human interest at large \cite{zhuang2020}. 
Addressing these limitations is commonly referred to as \emph{AI alignment}, an umbrella term including a wide array of methods supposed to make AI systems of different modalities behave as intended \cite{ji2023, ngo2022}.

Yet, to our knowledge, every metric for alignment is \emph{a posteriori}, i.e., a system is deemed aligned as long as it does not display misaligned behavior (e.g., through red teaming \cite{ganguli2022}). This stems from the fact that most of these methods focus on aligning generative models of complex modalities (text, images, video, audio, etc.) where the input and output domains are particularly vast, and where no single metric can perfectly represent the intended behavior. 

In the context of critical (e.g., social) decision processes, where an autonomous agent must repeatedly take actions in a complex environment with many stakeholders, \emph{a posteriori} alignment is not sufficient. Indeed, for the same reasons that a society would not trust a human policymaker with hidden motives and unknown track record, it would also distrust an autonomous policymaker whose objective is not, \emph{a priori}, perfectly clear and verifiably aligned, as the cost of a single bad action (due to their known \emph{brittleness}) could easily outweigh the benefits of leveraging such systems. This issue is accentuated by the fact that, unlike humans, holding a deceptive AI agent accountable remains a challenge \cite{novelli2023}.

Conversely, recent breakthroughs in AI have significantly increased its potential for beneficial use in these critical settings. For example, tax rates and public spending are typically set periodically by a parliament. However, this small group of representatives is inevitably overwhelmed by the vast amount of complex economic data as well as the pleas of millions of individuals. Due to this information bottleneck, governments can only make educated guesses about what is best for the society (assuming they can come to an agreement in the first place). On the other hand, provided that it is verifiably aligned, an autonomous government could efficiently leverage this vast amount of information in order to find the optimal tax rates and public spending (see Figure \ref{fig:autonomous_government}).
\begin{figure}[t!]
  \centering
  \includegraphics[width=0.98\textwidth]{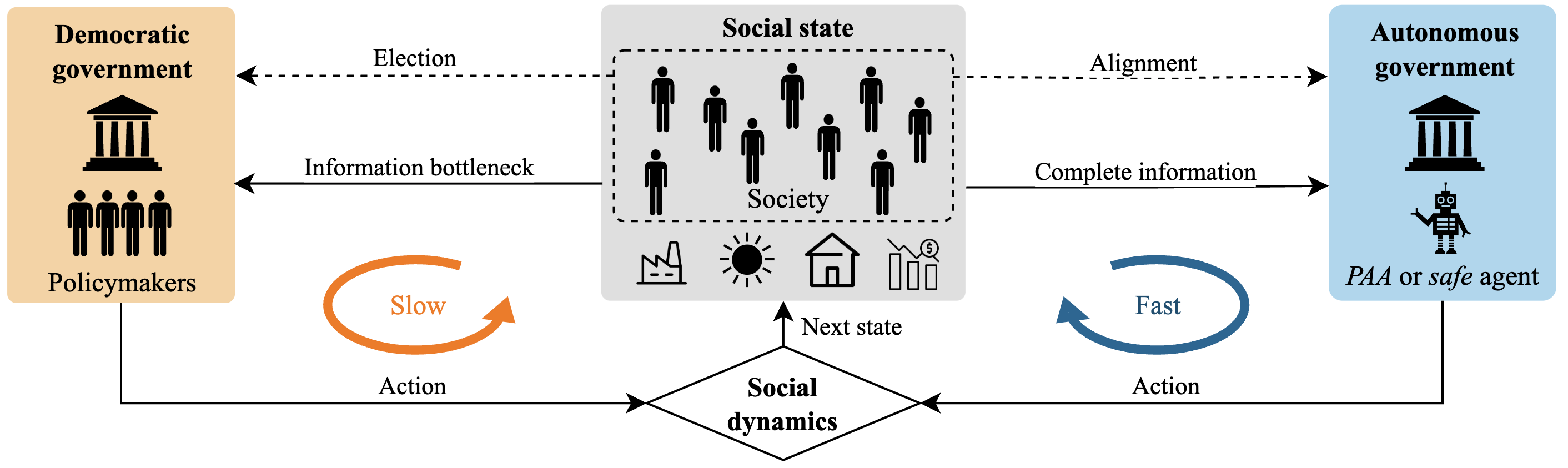}
  \caption{Democratic (left) vs autonomous (right) governments. Transparent elections must be replaced with reliable alignment mechanisms.}
  \label{fig:autonomous_government}
\end{figure}

While it is clear that \emph{a priori} alignment is essential for the safe delegation of such decision power, it comes at the cost of two unavoidable prerequisites. Firstly, it requires a definition of alignment that is both quantitative and measurable. Given our focus on social decision processes, where the perception of each action may differ among individuals, we define and quantify alignment by drawing from long-established theories of utility and social choice. For completeness, we also discuss all the assumptions that allow such a metric to exist.  Secondly, one can only ensure that an agent is \emph{a priori} aligned if one independently understands (at least reasonably well) how its available actions may affect society, regardless of how those effects are perceived. Indeed, asserting the safety of an autonomous agent would be impossible if it could take actions with unknown consequences. Although we make no assumption of the exact nature of this knowledge (domain specific expertise, heuristics, physical/data-driven modeling, or a mix thereof), we assume it is encapsulated in a world model that estimates, given an action and a current state, the probability of moving to any other state. 


A natural question then arises: Can we leverage this knowledge to construct a verifiably aligned policy? Or, at the very least, can we use it to ensure the safety of a more complex (black-box) policy? We address these questions by introducing \emph{probably approximately aligned} (PAA) and \emph{safe} policies, and by studying their existence based on the quality of this knowledge.
Akin to the \emph{probably approximately correct} (PAC) framework in the theory of learnability \cite{valiant1984}, we are also interested in the sample complexity of finding PAA policies. In our setting, this complexity is two-faceted: First, the number of calls to the world model (similar to the number of calls to a perfect generative model in \cite{kakade2003}), and second, the amount of feedback (think ballots) required to confidently rank which state is socially preferred. While we are not yet concerned with \emph{efficient} PAA policies, whose sample complexity is at most polynomial with respect to the desired tolerances, we argue that these complexities should, at least, be independent of the number of possible states (similarly to sparse sampling algorithms \cite{kearns2002}), as most natural state spaces are infinite. We refer to such policies as \emph{computable}. Conversely, as one can decide which decisions are delegated to autonomous agents, we assume that the action space is finite. Concretely, our contribution is threefold:

\begin{itemize}
    \item First, we define a new type of Social Markov Decision Processes, replacing the traditional reward with aggregated utilities of individuals. Expanding on this definition, \textbf{we present a formal quantitative definition of alignment} in the context of social choice, which naturally leads to the concept of \emph{probably approximately aligned} (PAA) policies.
    
    \item Next, given an approximate world model with sufficient statistical accuracy (which we quantitatively derive), \textbf{we prove the existence of PAA policies} by amending a simple sparse sampling algorithm. 

    \item Finally, \textbf{we propose a simple and intuitive method to safeguard any black-box (potentially misaligned) policy} in order to ensure that it does not take any destructive decisions, i.e., actions that might lead to a state where even the optimal policy is unsatisfactory. We refer to these adapted policies as \emph{safe} policies.
\end{itemize}

\section{Background}
\label{sec:background}
\subsection{Utility and social choice theory}
We are interested in building autonomous agents whose objective is to maximizes social satisfaction by taking actions that alter the state of the society. In this section, we define what is meant by \emph{social satisfaction} (often called \emph{social welfare}), and we provide the conditions under which it is quantifiable.

\subsubsection{Utility theory} It is commonly assumed that the agency of an individual is governed by its internal binary preference relationship $\preceq$ over the set of outcomes $\mc{S}$. When presented with two choices $s$ and $s'$, the individual will introspect its satisfaction (welfare) levels and either strictly prefer one outcome ($s \prec s'$ or $s' \prec s$) or be indifferent to both ($s\sim s'$). We are interested in quantitatively measuring these welfare levels. Debreu's representation theorem \cite{debreu1983} states that if this preference relationship is complete, continuous, and transitive on the topological space $\mc{S}$, then there exists infinitely many continuous, real-valued functions $u: \mc{S} \to \mathbb{R}$ (called \emph{utility functions}) such that ${u(s) \leq u(s') \, \Leftrightarrow \, s\preceq s', \, \forall s, s' \in \mc{S}}$ (note that any strictly monotonically increasing function can transform a valid utility function into another).
While these findings establish the existence of these utility functions, which are proxies for the intrinsic welfare levels of individuals, they do not provide insights into their measurability and interpersonal comparability. It is these two properties, however, that eventually determine what measure of social welfare can be derived. In a nutshell, measurability and comparability impose how much information can be extracted from the values $|u_i(s)-u_i(s')|$ and $u_i(s)-u_j(s)$, respectively, for any $i\neq j$ and $s\neq s'$. We detail the various measurability and comparability levels in Appendix \ref{sec:informational_basis}, and we refer to these levels as the \emph{informational basis} of utilities. Apart from that, we fix $0<U_{min} \leq u_i(s) \leq U_{max} < \infty$ for any $s$ and $i$ (we will allow $U_{min}=0$ in specific cases). That is, we assume that individuals cannot be infinitely satisfied or dissatisfied, and that they must scale their utilities when reporting (which does not imply measurability or comparability!). Lastly, we define $\Delta U \defas U_{max} - U_{min}$ and $\mc{U}\defas\{u:\mc{S} \to [U_{min},U_{max}]\}$. 

\subsubsection{Social choice theory} 
Let $\mc{I}$ be a society composed of $N$ members, each with its preference relationship $\preceq_i$ and a corresponding utility function $u_i\in\mc{U}$ over state space $\mc{S}$, $i\in\mc{I}$. Let $\mc{R}_\mc{S}$ be the set of complete orderings on $\mc{S}$ and $\mbf{u}\in\mc{U}^N$ be a vector gathering the utility functions of all individuals. A social welfare functional $f$ (SWFL) is a mapping ${\mc{D}_f \to \mc{R}_\mc{S}}$ with $\mc{D}_f\subseteq \mc{U}^N$. In other words, it is an aggregator of individuals' utilities, indirectly preferences. A long line of work \cite{daspremont1977, sen1977, roberts1980} has attempted to define which conditions this SWFL should satisfy (sometimes called axioms of cardinal welfarism, see Appendix \ref{sec:swfl_properties} for an extensive list of these properties and their respective implications). For the remainder of this work, we will follow the common assumption that any reasonable SWFL should satisfy the following: 
universality (U), informational basis invariance (XI), independence of irrelevant alternatives (IIA), weak Pareto criterion (WP) and anonymity (A).

An important result \cite{roberts1980} states that, for any informational basis (X) listed in Appendix \ref{sec:informational_basis} and any SWFL $f$ satisfying (XI), (U), (IIA) and (WP), there exists a social welfare function (SWF) ${W:\mathbb{R}^N\to\mathbb{R}}$ such that, if $W(\mathbf{u}(s))>W(\mathbf{u}(s'))$, then $s$ ranks strictly higher than $s'$ in $f(\mathbf{u})$. This is important as it states that the best social state must maximize a certain function $W$, which can therefore be used as a measure of social satisfaction. In other words, the non-welfare characteristics (i.e., any information influencing $f(\mbf{u})$ beside $\mbf{u}$, such as the judgement of an AI agent) are of secondary importance, as they can only break ties between $s$ and $s'$ such that $W(\mathbf{u}(s))=W(\mathbf{u}(s'))$ and cannot be detrimental to the society. 
Although we do not require it in this work, maximization of $W$ can be made sufficient if one imposes \emph{Welfarism} (W), e.g., by replacing (WP) with  \emph{Weak Pareto Indifference} (WPI) or more drastically by imposing \emph{Continuity} (C) (see Appendix \ref{sec:swfl_properties} for more details). 
We are left with the following question: Given a SWFL satisfying (XI), (U), (IIA), (WP) and (A), what is the form of the corresponding SWF? It turns out that the choice is relatively limited and depends mostly on the informational basis invariance (XI). It has been shown, with additional small technical assumptions \cite{cousins2021}, that the power mean defined in Eq.~\eqref{eq:power_mean} covers all possible SFWL. See Appendix \ref{sec:SWFL2SWF} for a detailed mapping between informational bases, SWFLs and parameter $q$.

\begin{equation}
\label{eq:power_mean}
W_q(\mathbf{u}(s);\mc{I})= \left\{\begin{array}{ll}
\min\limits_{i\in\mc{I}} u_i(s) & q=-\infty \\
\sqrt[q]{\frac{1}{|\mc{I}|}\sum\limits_{i\in\mc{I}}u_i(s)^q} & q\in \R^* \\ 
\sqrt[|\mc{I}|]{\prod\limits_{i\in\mc{I}} u_i(s)} & q=0 \\
\max\limits_{i\in\mc{I}} u_i(s) & q=\infty
\end{array}\right.
\end{equation}

\subsubsection{Future discounted social welfare}
At deployment, a safe autonomous agent must provide assurances that its future actions will continue to serve the best interests of society. This becomes ill-defined if these interests evolve with time. To address this, we assume that both $\mc{I}$ and $\mbf{u}$ are constant, i.e., $u_i(s;t)=u_i(s;t')\equiv u_i(s)$ for all $s$, $i$ and discrete times $t\neq t'$ . In addition, we also assume that the meaning of these utilities does not change with time, that is, if $u_i(s;t)\geq u_i(s';t')$ for states $s, s'$ and times $t\neq t'$, then $i$'s welfare is at least as high in state $s$ at time $t$ than in state $s'$ at time $t'$ (or vice versa for $\leq$).  Finally, we assume that the SWFL $f$ is such that its corresponding SWF remains the same. In other words, only the method to break ties between states can evolve through time. This makes it possible to predict, at time $t$, what will be the satisfaction levels at time $t'>t$ in any given state. However, to model the fact that humans prefer immediate reward, we discount the utility of the state at time $t'$ with a factor $\gamma^{(t'-t)}$ when comparing it with the utility of the state at time $t$, where $\gamma\in[0,1[$ is a discount factor.
From these assumptions, it becomes possible, at time $t=0$, to quantify the cumulative social welfare of any future state trajectory $s_1s_2s_3s_4...$ by computing the quantity $\sum_{t=0}^\infty \gamma^{t} W_q(\mbf{u}(s_{t+1}))$, which we will refer to as the \emph{future discounted social welfare} of that trajectory (see Section \ref{sec:SMDP}). Using this quantity, we formally define alignment as follows:

\begin{center}
\emph{An autonomous agent is aligned if and only if it always takes actions that maximize the expected future discounted social welfare.}
\end{center}

\subsection{Social dynamics}
The expectation in the above definition accounts for the inherent randomness of most natural systems. In this section, we model the social dynamics as a particular type of Markov Decision Process (MDP), where the probability of transitioning to any state depends solely on the current state and next action.

\subsubsection{Markov decision process}
Let ${\mc{M}=(\mc{S},\mc{A}, p, r, \gamma)}$ be an infinite horizon, $\gamma$-discounted, discrete time MDP where
$\mc{S}$ is the state space (discrete or continuous),
$\mc{A}$ is the action space (discrete or continuous),
$p:\mc{S}\times\mc{A}\to\mc{P}(\mc{S})$ is the transition dynamics of the environment (with $\mc{P}(\mc{S})$ the set of probability distributions over $\mc{S}$), 
$r: \mc{S} \times \mc{A}\to [R_{min}, R_{max}]$ is the reward of the environment, and
$\gamma\in[0,1[$ is a discount factor (favoring immediate over distant rewards). Given $s\in\mc{S}$ and $a\in\mc{A}$, $p(s'|s,a)$ is the probability of transitioning to state $s'$ after taking action $a$ in state $s$, and $r(s,a)$ is the expected immediate reward after taking that action.
In MDPs, actions are chosen according to a policy ${\pi:\mc{S}\to\mc{P}(\mc{A})}$, with $\pi(a|s)$ the probability of taking action $a$ in state $s$. Given an initial state $s_0$, the tuple $(\mc{M}, \pi, s_0)$ fully defines a distribution $p_\tau$ over trajectories $\tau=s_0a_0s_1a_1s_2a_2...$, where $a_t\sim\pi(\cdot | s_t)$ and $s_{t+1}\sim p(\cdot|s_t,a_t)$. If the environment dynamics or the policy are deterministic, we will use the slight abuse of notation $s_{t+1}=p(s_t, a_t)$ and $a_t=\pi(s_t)$, respectively. The efficacy of a given policy $\pi$ is measured by the state and state-action value functions, defined respectively as follows:
\[V^\pi(s)=\mathbb{E}_{\tau\sim p_\tau(\cdot | \pi, s_0=s)}\left[\sum_{t=0}^\infty \gamma^t r(s_t,a_t)\right], \qquad Q^\pi(s,a) = r(s,a) + \gamma \mathbb{E}_{s'\sim p(\cdot|a,s)}[V^\pi(s')].\]
For a given state $s$ and action $a$, the optimal state and action-state value functions are defined by ${V^*(s)=\sup_{\pi}V^\pi(s)}$ and ${Q^*(s,a)=\sup_{\pi}Q^\pi(s,a)}$. Given $\varepsilon \geq 0$, a policy is called $\varepsilon$-optimal (or optimal) if it satisfies ${V^{\pi}(s) \geq V^*(s) - \varepsilon}$ (respectively ${V^{\pi}(s) = V^*(s)}$) for all $s$.
Given small technical assumptions, it can be shown that there always exists an optimal policy \cite{sutton2018, feinberg2011}.

\subsubsection{Social Markov decision process}

\label{sec:SMDP}
\bdefi{Social Markov Decision Process}
Let $\mc{I}$ be a society with utility profile $\mbf{u}\in\mc{U}^N$ and social welfare function $W_q$. In addition, let $\mc{S}$ and $\mc{A}$ be the corresponding state and action spaces, $p$ the social environment dynamics and $\gamma$ a discount factor. The MDP $(\mc{S}, \mc{A}, p, r_\mathcal{I}, \gamma)$ with reward $r_\mc{I}$ defined by 
\beq
\label{eq:true_reward}
r_\mc{I}(s, a)= \E_{s'\sim p(\cdot|s, a)}[W_q(\mbf{u}(s');\mc{I})]
\eeq
is a Social Markov Decision Process (SMDP), denoted $\mc{M}_\mc{I}=(\mc{S}, \mc{A}, p, W_q, \mbf{u}, \gamma)$.
\edefi
In this setting, $\mc{S}$ contains all the possible states of the $N$ individuals, as well as all those of the environment in which they evolve, and $\mc{A}$ contains all the actions that are delegated to an autonomous agent. Expanding on this definition, we can formally define the alignment metric proposed above:
\bdefi{Expected Future Discounted Social Welfare}
Let $\mc{M}_\mc{I}=(\mc{S}, \mc{A}, p, W_q, \mbf{u}, \gamma)$ be a SMDP.
The expected future discounted social welfare of a policy $\pi$ in state $s$ is defined as 
\[ 
\mc{W}^\pi(s) = \E_{\tau\sim p_\tau(\cdot | \pi, s_0=s)}\left[\sum_{t=0}^\infty \gamma^t W_q(\mbf{u}(s_{t+1}); \mc{I}) \right],
\]
and takes value between $\mc{W}_{min}\defas\frac{U_{min}}{1-\gamma}$ and $\mc{W}_{max}\defas\frac{U_{max}}{1-\gamma}$, with $\Delta \mc{W}\defas \mc{W}_{max}-\mc{W}_{min}$.
\edefi
As shown with the next lemma, the expected future discounted social welfare of a SMDP is equivalent to the state value function of the corresponding MDP. This equivalence makes it a natural metric for alignment, as it enables the use of a wide array of known results on MDPs.

\begin{restatable}{lemma}{SMDPequivalence}
\label{lem:SMDPequivalence}
For any SMDP $\mc{M}_\mc{I}=(\mc{S}, \mc{A}, p, W_q, \mbf{u}, \gamma)$, the expected future discounted social welfare of a policy $\pi$ is the state value function of $\pi$ in the MDP $\mc{M}=(\mc{S}, \mc{A}, p, r_\mc{I}, \gamma)$, with $r_\mc{I}$ set in Eq.~\eqref{eq:true_reward}.
\end{restatable}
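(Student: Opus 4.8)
The plan is to show that the two quantities are defined by structurally identical expectations, so the equivalence follows by unfolding definitions and applying the definition of $r_\mc{I}$ in Eq.~\eqref{eq:true_reward}. First I would write out the state value function $V^\pi(s)$ of the MDP $\mc{M}=(\mc{S},\mc{A},p,r_\mc{I},\gamma)$ using the definition from the background section, obtaining
\[
V^\pi(s)=\E_{\tau\sim p_\tau(\cdot\mid\pi,s_0=s)}\left[\sum_{t=0}^\infty\gamma^t\, r_\mc{I}(s_t,a_t)\right].
\]
The goal is to match this with $\mc{W}^\pi(s)$, whose summand is $\gamma^t W_q(\mbf{u}(s_{t+1});\mc{I})$. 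The key observation is that $r_\mc{I}(s_t,a_t)=\E_{s'\sim p(\cdot\mid s_t,a_t)}[W_q(\mbf{u}(s');\mc{I})]$ is precisely a one-step-ahead expectation of the social welfare, so that the reward accrued at time $t$ already encodes the (expected) welfare of the state reached at time $t+1$.

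Next I would make this rigorous by taking the expectation over trajectories inside the sum. Since everything is nonnegative and bounded (the welfare lies in $[U_{min},U_{max}]$ and $\gamma<1$, so the series converges absolutely), I can interchange the expectation and the infinite sum by dominated convergence or the monotone convergence theorem, reducing the claim to a termwise identity: for each $t$,
\[
\E_{\tau}\!\left[r_\mc{I}(s_t,a_t)\right]=\E_{\tau}\!\left[W_q(\mbf{u}(s_{t+1});\mc{I})\right].
\]
This in turn follows from the tower property of conditional expectation: conditioning on the history up through $(s_t,a_t)$, the inner expectation defining $r_\mc{I}(s_t,a_t)$ is exactly the conditional expectation of $W_q(\mbf{u}(s_{t+1});\mc{I})$ given $(s_t,a_t)$, because the trajectory distribution $p_\tau$ generates $s_{t+1}\sim p(\cdot\mid s_t,a_t)$ by the Markov property. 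Averaging over the history then yields the unconditional equality.

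Summing the termwise identities weighted by $\gamma^t$ and reapplying the interchange of sum and expectation gives $V^\pi(s)=\mc{W}^\pi(s)$ for every $s$, which is the claim. I expect the main (though modest) obstacle to be the bookkeeping around the index shift: the reward at step $t$ pairs with the welfare of the state at step $t+1$, so one must verify that the discount factors line up correctly and that no boundary term is lost at $t=0$. Everything else is a direct unfolding of definitions, and the boundedness assumption $0<U_{min}\leq u_i(s)\leq U_{max}<\infty$ together with $\gamma\in[0,1[$ guarantees that all the interchanges and the convergence of the series are justified.
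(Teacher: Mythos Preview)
Your proposal is correct and follows essentially the same route as the paper: expand $V^\pi$, substitute the definition of $r_\mc{I}$, interchange the infinite sum with the trajectory expectation, and apply the tower property termwise to identify $\E_\tau[r_\mc{I}(s_t,a_t)]$ with $\E_\tau[W_q(\mbf{u}(s_{t+1});\mc{I})]$. If anything, you are slightly more careful than the paper in explicitly justifying the interchange via boundedness and $\gamma<1$.
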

The proof follows directly from the tower property of conditional expectations (see Appendix \ref{sec:intermediate_proofs}).

\subsubsection{Approximate rewards}
If $p$ is unknown, the true reward of the SMDP in Eq.~\eqref{eq:true_reward} can only be estimated \emph{a posteriori}, i.e., after taking action $a$ in state $s$ multiple times and observing $W_q(\mbf{u}(s'))$. This would require a long exploration phase if $\mc{S}$ is large, which can be costly and even impossible in critical decisions processes. Instead, one must usually plan using an approximate dynamics model $\hat{p}\in\mc{P}(\mc{S})$ to anticipate the effect of an action. Moreover, even if $p$ is known, computing $W_q(\mbf{u}(s'))$ exactly for a given $s'\sim p$ would require full knowledge of $\mbf{u}(s')$, which is only possible by obtaining feedback from the entire society about $s'$ without making additional assumptions on $\mbf{u}$. For these reasons, we consider a more realistic scenario: Given a set of assessors $I_n\subseteq \mc{I}$ of size $n\leq N$ and an approximate dynamics model $\hat{p}$, the true reward can be approximated by asking the assessors about their utilities on the anticipated future societal states:
\begin{equation}
\label{eq:empirical_reward}
\hat{r}_{I_n}(s, a; K) = \hat{\E}^{K}_{s'\sim \hat{p}(\cdot|s,a)}[W_q(\mbf{u}(s');I_n)] \defas \frac{1}{K}\sum_{k=1}^{K} W_q(\mbf{u}(s^k)).
\end{equation}
where $\hat{\E}^{K}_{s'\sim \hat{p}}$ is a “Monte Carlo” estimation of $\E_{s'\sim \hat{p}}$, with $K\in\N$ samples independently drawn from $\hat{p}(\cdot |s,a)$, denoted $s^1,s^2...,s^{K}$. The core of our analysis is to understand how $K$, $n$ and the inaccuracies of $\hat{p}$ affect the validity of alignment guarantees.

\section{Results}
\label{sec:results}
\subsection{Existence of aligned policies}
Having formally derived a quantitative measure of alignment $\mathcal{W}^\pi$ in the context of social decision processes, we are now prepared to introduce and prove the existence of verifiably aligned policies:
\bdefi{Probably Approximately Aligned Policy}
Given $0\leq \delta < 1$, $\varepsilon > 0$ and a SMDP $\mc{M}_\mc{I}=(\mc{S}, \mc{A}, p, W_q, \mbf{u}, \gamma)$, a policy $\pi$ is ${\delta\mbox{-}\varepsilon\mbox{-PAA}}$ (Probably Approximately Aligned) if, for any given $s\in\mc{S}$, the following inequality holds with probability at least $1-\delta$:
\beq
\label{eq:paa_policy_condition}
\mc{W}^\pi(s) \geq \max_{\pi'} \mc{W}^{\pi'}(s) - \varepsilon
\eeq
\edefi

\bdefi{Approximately Aligned Policy}
Given $\varepsilon>0$, a policy $\pi$ is $\varepsilon\mbox{-AA}$ (Approximately Aligned) if and only if it is ${0\mbox{-}\varepsilon\mbox{-PAA}}$.
\edefi
\noindent We state below one of our main contribution, i.e., the existence of computable PAA and AA policies, which follows directly from Theorem \ref{thm:paa_policy} ($\delta > 0$) and Corollary \ref{cor:aa_policy} ($\delta=0$) in the next section.
\begin{restatable}[Existence of PAA and AA policies]{theorem}{paaexistence}
\label{thm:paa_policy_existence}
    Given a SMDP ${\mc{M}_\mc{I}=(\mc{S}, \mc{A}, p, W_q, \mbf{u}, \gamma)}$ with $q\in\R$ and any tolerances $\varepsilon>0$ and $0\leq \delta < 1$, if there exists an approximate world model $\hat{p}$ such that 
    \begin{equation} 
    \label{eq:requiredaccuracy}
    \sup_{(s,a)\in\mc{S}\times\mc{A}}D_{KL}(p(\cdot|s,a)\Vert \hat{p}(\cdot|s,a))< \frac{\varepsilon^2(1-\gamma)^4}{8\Delta \mc{W}^2},
    \end{equation}
    then there exists a computable $\delta$-$\varepsilon$-PAA policy. Consequently, there also exists a computable $\varepsilon$-AA policy.
\end{restatable}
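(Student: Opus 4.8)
The plan is to recast the alignment requirement as near-optimality in an MDP, plan in a surrogate MDP built entirely from $\hat p$, and then transfer the guarantee back to the true dynamics. By Lemma~\ref{lem:SMDPequivalence}, $\mc W^\pi(s) = V^\pi(s)$ in $\mc M = (\mc S,\mc A, p, r_\mc I, \gamma)$, so a policy is $\delta$-$\varepsilon$-PAA exactly when it is $\varepsilon$-optimal in value with probability at least $1-\delta$. I would introduce the surrogate $\hat{\mc M} = (\mc S,\mc A,\hat p,\hat r,\gamma)$ with $\hat r(s,a) = \E_{s'\sim\hat p(\cdot|s,a)}[W_q(\mbf u(s');\mc I)]$, the idealized ($K\to\infty$, $I_n=\mc I$) reward of Eq.~\eqref{eq:empirical_reward}. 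Since $\hat p$ is the only object we can sample, everything is built around planning in $\hat{\mc M}$ and controlling the discrepancy with $\mc M$.

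The transfer step is a perturbation (simulation) lemma. Because $W_q(\mbf u(\cdot);\mc I)\in[U_{min},U_{max}]$ is bounded with range $\Delta U = (1-\gamma)\Delta\mc W$, both the reward gap $|r_\mc I - \hat r| = |\E_p[W_q]-\E_{\hat p}[W_q]|$ and the transition gap are governed by the single quantity $\|p(\cdot|s,a)-\hat p(\cdot|s,a)\|_{\mathrm{TV}}$, which Pinsker's inequality bounds by $\sqrt{\tfrac12 D_{KL}(p\Vert\hat p)}$. Writing the Bellman recursion for $V^\pi_\mc M-V^\pi_{\hat{\mc M}}$ and bounding the reward and transition contributions separately gives, for every $\pi$ and $s$, $|V^\pi_\mc M(s)-V^\pi_{\hat{\mc M}}(s)|\leq \tfrac{\Delta\mc W}{1-\gamma}\sqrt{\tfrac12 D_{KL}(p\Vert\hat p)}$. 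The factors $8$ and $(1-\gamma)^4$ in Eq.~\eqref{eq:requiredaccuracy} are calibrated precisely so that this gap, and hence (taking suprema over $\pi$) the optimal-value gap $|V^*_\mc M-V^*_{\hat{\mc M}}|$, is strictly below $\tfrac{\varepsilon(1-\gamma)}{4}\leq\tfrac{\varepsilon}{4}$, leaving at least $\tfrac{\varepsilon}{2}$ of the budget for planning.

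The planning step, and the crux of the proof, is to amend the sparse sampling algorithm of Kearns, Mansour and Ng so that it runs on $\hat{\mc M}$: from a query state it grows a depth-$H$, width-$C$ lookahead tree with next states drawn from $\hat p$, and it substitutes the empirical reward $\hat r_{I_n}(\cdot;K)$ of Eq.~\eqref{eq:empirical_reward} for each exact node reward. Four error sources must be kept jointly under the residual $\tfrac{\varepsilon}{2}$ budget with total failure probability at most $\delta$: horizon truncation, bounded deterministically by $\gamma^H\Delta\mc W$ and suppressed by taking $H$ logarithmically large; the finite-width sparse-sampling error in $C$; the Monte-Carlo reward error in $K$, controlled by Hoeffding since $W_q\in[U_{min},U_{max}]$; and the assessor error $|W_q(\mbf u(\cdot);I_n)-W_q(\mbf u(\cdot);\mc I)|$ from polling only $n$ individuals. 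For the last source I would concentrate the inner average $\tfrac1n\sum_{i\in I_n}u_i^q$ around $\tfrac1N\sum_{i\in\mc I}u_i^q$ by Hoeffding and then pass through the root $x\mapsto x^{1/q}$, which is Lipschitz on $[U_{min}^q,U_{max}^q]$ exactly because $U_{min}>0$ and $q$ is finite; this is precisely why the statement restricts to $q\in\R$ and excludes the $q=\pm\infty$ (min/max) extremes, for which no subsample can certify the worst-off individual's welfare. A union bound over the $(|\mc A|C)^H$ tree nodes upgrades the per-node high-probability bounds into a single $1-\delta$ guarantee that the returned action is $\tfrac{\varepsilon}{2}$-optimal in $\hat{\mc M}$, with $H,C,K,n$ depending only on $\varepsilon,\gamma,\delta,|\mc A|$ and the welfare range, never on $|\mc S|$, so the local policy is computable.

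Chaining the two transfer gaps with the planner guarantee via the triangle inequality yields $V^*_\mc M(s)-V^{\hat\pi}_\mc M(s)\leq \tfrac{\varepsilon(1-\gamma)}{4}+\tfrac{\varepsilon}{2}+\tfrac{\varepsilon(1-\gamma)}{4}\leq\varepsilon$ with probability at least $1-\delta$, so $\hat\pi$ is a computable $\delta$-$\varepsilon$-PAA policy. For the $\varepsilon$-AA corollary ($\delta=0$) I would eliminate the two stochastic sources by polling the whole society ($n=N$) and evaluating the reward expectation and lookahead exactly rather than by sampling, so that only the deterministic truncation error survives and can be forced below $\tfrac{\varepsilon}{2}$ by enlarging $H$; the resulting deterministic policy then satisfies Eq.~\eqref{eq:paa_policy_condition} surely. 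I expect the main obstacle to sit in the planning step: simultaneously fitting all four error sources under the $\varepsilon$/$\delta$ budget while keeping every sample count independent of $|\mc S|$, together with the $q$-root concentration argument that forces the finiteness of $q$.
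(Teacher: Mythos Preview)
Your overall route is close to the paper's, just factored differently: you isolate the model error up front via a simulation lemma on the surrogate $\hat{\mc M}$, whereas the paper keeps model and sampling errors together in a single six-term decomposition of $|Q^*(s,a)-\hat Q^H(s,a)|$ (your transfer terms are their $Z_2,Z_4$, bounded via Lemma~\ref{lem:DKL_bound}; your width, reward, and assessor errors are their $Z_5,Z_3,Z_1$), then applies Lemma~\ref{lem:value_optimality} to pass from $Q$-accuracy to value suboptimality. Both organisations are valid and your transfer constants check out. One minor mismatch: since $I_n$ is drawn without replacement from a finite $\mc I$, the paper uses a Hoeffding--Serfling bound (Lemma~\ref{lem:HSI}) rather than plain Hoeffding before pushing through the $q$-root; this is Lemma~\ref{lem:power_mean_concentration}.

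There is, however, a genuine gap in your $\varepsilon$-AA ($\delta=0$) argument. You propose to ``evaluate the reward expectation and lookahead exactly rather than by sampling''. That destroys computability in the paper's sense: $\E_{s'\sim\hat p(\cdot|s,a)}[\,\cdot\,]$ is an integral over $\mc S$, and the entire point of sparse sampling is that the number of queries to $\hat p$ be independent of $|\mc S|$. The paper does \emph{not} de-randomise for $\delta=0$. Instead, Corollary~\ref{cor:aa_policy} keeps the same sampling-based $\pi_{PAA}$ and invokes part~2) of Lemma~\ref{lem:value_optimality}, which converts a high-probability $Q$-bound into an \emph{almost-sure} value bound by absorbing the failure probability into an additive $2\phi_H\Delta\mc W/(1-\gamma)$ penalty; one then chooses $K$ large enough that this penalty fits in the $\varepsilon$ budget. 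So the AA policy is still randomised and still sample-based; only the \emph{guarantee} becomes deterministic.

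A related looseness also affects your $\delta>0$ case. Your chaining step asserts ``$V^*_{\hat{\mc M}}(s)-V^{\hat\pi}_{\hat{\mc M}}(s)\leq \varepsilon/2$ with probability at least $1-\delta$'', but $V^{\hat\pi}_{\hat{\mc M}}(s)$ is a deterministic number once the per-step randomness of $\hat\pi$ is averaged out, so a high-probability statement about it is not well-posed. What your union bound over the $(C|\mc A|)^{H}$ nodes actually yields is a $(1-\delta)$ guarantee that a \emph{single call} to the planner returns an action with near-optimal $Q^*$; turning that into a bound on $\mc W^{\hat\pi}(s)$ still requires a recursion over the first $k$ calls all succeeding, at the cost of an extra $\gamma^k\Delta\mc W$ term and a factor $2k$ in the failure probability. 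This is exactly part~1) of Lemma~\ref{lem:value_optimality}, which the paper invokes explicitly and which your sketch omits.
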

\subsection{Near optimal planning}
\label{sec:near_optimal_planning}
We prove Theorem \ref{thm:paa_policy_existence} by providing a planning policy $\pi_{PAA}$ and by proving it satisfies Eq.~\eqref{eq:paa_policy_condition} under the given assumptions. To this end, we present a modified version of the sparse sampling algorithm \cite{kearns2002} (which originally assumes that $p$ and $r$ are known, which is not the case here). Given some parameters $K, C$ and $n$, we define the recursive functions:

\begin{align}
\hat{Q}^h(s,a;K,C, I_n) & = \left\{\begin{array}{ll} 0 & h=0 \\ \hat{r}_{I_n}(s,a;K) + \gamma\hat{\E}^{C}_{s'\sim \hat{p}(\cdot | s,a)}\left[\hat{V}^{h-1}(s';K,C,I_n)\right]  & h \in \N^* \end{array}\right. \label{eq:Qestimates}\\
\hat{V}^h(s;K,C, I_n) & = \max_{a\in\mc{A}}\hat{Q}^h(s,a;K,C, I_n). \nonumber
\end{align}
Intuitively, $\hat{Q}^h$ and $\hat{V}^h$ are recursive approximations of $Q^*$ and $V^*$, $K$ and $C$ controls the accuracy of the empirical expectation operators in $\hat{Q}$, $h$ controls how far one looks into the future, and $n$ controls the accuracy of the social welfare function estimates. The proposed PAA policy is simply the greedy policy acting on the state-action value estimates, i.e.,
\begin{equation}
\label{eq:paa_policy}
    \pi_{PAA}(s) = \max_{a\in\mc{A}} \hat{Q}^H(s,a;K, C, I_n).
\end{equation}
It is deterministic in the sense that, for a given $\hat{Q}^H$, it outputs a single action. However, $\hat{Q}^H$ is non-deterministic since $I_n$ and $s'$ are sampled randomly. The next results clarifies under which conditions $\pi_{PAA}$ is indeed $\delta$-$\varepsilon$-PAA (Theorem \ref{thm:paa_policy}) or $\varepsilon$-AA (Corollary \ref{cor:aa_policy}).
\begin{restatable}[$\pi_{PAA}$ is $\delta$-$\varepsilon$-PAA]{theorem}{paapolicy}
\label{thm:paa_policy}
Let $\mc{M}_\mc{I}=(\mc{S}, \mc{A}, p, W_q, \mbf{u}, \gamma)$ be a SMDP with $q\in\R$ and $\hat{p}$ an approximate dynamics model such that $d \defas \sup_{(s,a)}D_{KL}(p(\cdot|s,a)\Vert \hat{p}(\cdot|s,a))< \frac{\varepsilon^2(1-\gamma)^6}{8\Delta U^2}$ for any desired tolerances $\varepsilon >0$ and $0 < \delta < 1$. For any 
$k\geq \log_\gamma\left(\frac{(1-\gamma)\varepsilon}{\Delta U} - \frac{\sqrt{8d}}{(1-\gamma)^2}\right)$, define 
${\beta \defas \left(\frac{(1-\gamma)^2\varepsilon}{8}- \frac{\sqrt{d}\Delta U}{\sqrt{8}(1-\gamma)} - \frac{(1-\gamma)\gamma^k\Delta U}{8} \right)}$ and let $\pi_{PAA}$ be the policy defined in Eq.~\eqref{eq:paa_policy} with parameters
\begin{itemize}
    \item $H \geq \max\left\{1, \log_\gamma\left(\frac{\beta}{\Delta U}\right)\right\}$,
    \item $K \geq \frac{\Delta U^2}{\beta^2}\left((H-1)\ln\left(\sqrt[H-1]{24k}(H-1)|\mc{A}|\frac{\Delta U^2}{\beta^2}\right) + \ln\left(\frac{1}{\delta}\right)\right)$,
    \item $C \geq \frac{\gamma^2}{(1-\gamma)^2}K$,
    \item $n \geq N\left(1 + \frac{\Delta U^2 N}{2 K}\Gamma\left(\beta, U_{min}, U_{max}, q\right)\right)^{-1}$,
\end{itemize}
and where $\Gamma\left(\beta, U_{min}, U_{max}, q\right)$ is a function defined in Eq.~\eqref{eq:Gamma}. Then $\pi_{PAA}$ is a $\delta$-$\varepsilon$-PAA policy.
\end{restatable}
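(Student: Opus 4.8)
The plan is to invoke Lemma~\ref{lem:SMDPequivalence} to identify $\mc{W}^{\pi}$ with the state value function $V^\pi$ of the MDP $(\mc{S},\mc{A},p,r_\mc{I},\gamma)$, so that Eq.~\eqref{eq:paa_policy_condition} becomes the statement that $V^{\pi_{PAA}}(s)\geq V^*(s)-\varepsilon$ holds with probability at least $1-\delta$. I would then adapt the sparse sampling analysis of \cite{kearns2002} to the present setting, the novelty being that neither $p$ nor $r_\mc{I}$ is accessible: transitions are sampled from $\hat p$ rather than $p$, the reward is itself an expectation estimated by $K$ Monte Carlo draws, and the true welfare $W_q(\mbf{u}(\cdot);\mc{I})$ is replaced by the subsampled $W_q(\mbf{u}(\cdot);I_n)$. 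Accordingly I would split the total gap $V^*(s)-V^{\pi_{PAA}}(s)$ into four contributions, namely finite-horizon truncation (controlled by $H$), model mismatch (controlled by $d$), Monte Carlo noise (controlled by $K$ and $C$), and assessor subsampling (controlled by $n$), and show that $\beta$ is exactly the per-level budget left for the last two once truncation and model error have been charged against the global budget $\tfrac{(1-\gamma)^2\varepsilon}{8}$, as the definition of $\beta$ makes explicit.

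The core estimate is that, with high probability, $\hat Q^H(s,a;K,C,I_n)$ is within order $\beta$ of the exact $H$-horizon optimal state-action value computed under $\hat p$ with the full-society reward $\E_{s'\sim\hat p}[W_q(\mbf{u}(s');\mc{I})]$. I would prove this by induction on the recursion depth $h$ in Eq.~\eqref{eq:Qestimates}. At each level the error splits into (i) the reward error $|\hat r_{I_n}(s,a;K)-\E_{s'\sim\hat p}[W_q(\mbf{u}(s');\mc{I})]|$, treated by Hoeffding over the $K$ draws (range $\Delta U$) together with the subsampling term deferred to the next paragraph; (ii) the value error $\gamma\,|\hat{\E}^{C}[\hat V^{h-1}]-\E_{\hat p}[\hat V^{h-1}]|$, treated by Hoeffding over the $C$ draws, whose range is now $\Delta\mc{W}=\Delta U/(1-\gamma)$ and which carries an extra factor $\gamma$, which is precisely what forces $C\geq\frac{\gamma^2}{(1-\gamma)^2}K$ in order to match the per-level accuracy of the reward term; and (iii) the recursive term $\gamma\sup|\hat V^{h-1}-\tilde V^{h-1}|$, which telescopes. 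Unrolling the recursion turns a per-level error of order $\beta$ into a $Q$-accuracy of order $\beta/(1-\gamma)$. The high-probability guarantee then comes from a union bound over the roughly $(|\mc{A}|C)^H$ sampled nodes of each lookahead tree and the $O(k)$ trees rebuilt along the trajectory; inverting Hoeffding for target accuracy $\beta$ and failure budget $\delta$ reproduces the stated lower bound on $K$, including the $\ln(1/\delta)$ and the logarithm of the tree and trajectory size.

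The step I expect to be the main obstacle is the assessor-subsampling term, which is where the function $\Gamma$ of Eq.~\eqref{eq:Gamma} and the unusual, $K$-dependent lower bound on $n$ originate. Here I would show that $W_q(\mbf{u}(s');I_n)$ concentrates on $W_q(\mbf{u}(s');\mc{I})$: the inner moment $\frac1n\sum_{i\in I_n}u_i(s')^q$ is the mean of a size-$n$ sample drawn without replacement from the finite population $\{u_i(s')^q\}_{i\in\mc{I}}$, so it concentrates around the population mean $\frac1N\sum_{i\in\mc{I}}u_i(s')^q$ with a variance carrying the finite-population correction factor $\frac{N-n}{N-1}$; transferring this through the map $x\mapsto x^{1/q}$, which is Lipschitz on $[U_{min}^q,U_{max}^q]$, yields a bound in which the Lipschitz constant and the population variance are packaged into $\Gamma(\beta,U_{min},U_{max},q)$. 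Because the same $I_n$ is reused for all $K$ reward draws, this systematic error does not average out in $K$ and must instead be balanced against the $\hat p$-sampling variance of order $1/K$; solving the resulting finite-population sample-size equation for $n$ gives precisely the form $n\geq N(1+\tfrac{\Delta U^2 N}{2K}\Gamma)^{-1}$. I would note that the restriction $q\in\R$ in the statement is exactly what lets me run this Lipschitz-and-variance argument uniformly, whereas the boundary cases $q\in\{0,\pm\infty\}$ (geometric mean, min, max) would need separate treatment and are excluded.

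Finally I would assemble the pieces. The truncation error is charged by the choice $H\geq\log_\gamma(\beta/\Delta U)$, which makes $\gamma^H\Delta U\leq\beta$; the model error enters at every sampled transition and every reward, so via Pinsker's inequality $\|p-\hat p\|_{TV}\leq\sqrt{d/2}$ it produces a per-level contribution matching the subtracted term $\frac{\sqrt d\,\Delta U}{\sqrt8(1-\gamma)}$. Combining the order-$\beta/(1-\gamma)$ accuracy of $\hat Q^H$ with the standard greedy near-optimality (performance-difference) argument \cite{kakade2003}, and truncating the performance-difference sum at the effective horizon $k$ so that the residual tail is the $\gamma^k$ term, gives $V^*(s)-V^{\pi_{PAA}}(s)\le \frac{c}{(1-\gamma)^2}\left(\beta+\frac{\sqrt d\,\Delta U}{\sqrt8(1-\gamma)}+\frac{(1-\gamma)\gamma^k\Delta U}{8}\right)$ for an absolute constant $c\le 8$. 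Since the parenthesized quantity equals $\frac{(1-\gamma)^2\varepsilon}{8}$ by the very definition of $\beta$ (and the lower bound on $k$, equivalent to the hypothesis on $d$, is exactly what keeps $\beta$ positive), the right-hand side is at most $\varepsilon$. On the event that all Monte Carlo estimates meet their targets, which holds with probability at least $1-\delta$ by the union bound above, this establishes Eq.~\eqref{eq:paa_policy_condition}, i.e.\ $\pi_{PAA}$ is $\delta$-$\varepsilon$-PAA.
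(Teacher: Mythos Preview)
Your proposal follows the same architecture as the paper: an induction on depth bounding $|Q^*-\hat Q^h|$, with Hoeffding for the $K$- and $C$-sample Monte Carlo terms, a Pinsker/KL bound (the paper's Lemma~\ref{lem:DKL_bound}) for the model mismatch, Hoeffding--Serfling-type concentration for the power-mean subsampling (the paper's Lemma~\ref{lem:power_mean_concentration}), and then the greedy near-optimality lemma with a $k$-step truncation (the paper's Lemma~\ref{lem:value_optimality}, part~1). The only structural difference is that the paper compares $\hat Q^H$ directly to the infinite-horizon $Q^*$ under $p$ via a single six-term decomposition at each level, whereas you route through an intermediate ``exact $H$-horizon value under $\hat p$'' and charge truncation and model error separately; both decompositions are valid and lead to the same constants.

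Two points to correct. First, $q=0$ is \emph{not} excluded: the statement requires $q\in\R$, and Eq.~\eqref{eq:Gamma} has an explicit case $q=0$ (geometric mean), handled in Lemma~\ref{lem:power_mean_concentration} via $\log/\exp$ rather than $x\mapsto x^{1/q}$. Your Lipschitz-through-$x^{1/q}$ sketch does not cover this case; you would need the separate branch the paper supplies. Second, your explanation of the $n$ bound is the wrong mechanism: the $K$-dependence of $n$ does \emph{not} come from ``balancing a non-averaging subsampling bias against the $1/K$ sampling variance''. In the paper, both the subsampling error $\varepsilon_1$ and the Monte Carlo error $\varepsilon_3$ are set to the same target $\beta$, and the choice of $n$ is made so that the subsampling \emph{failure probability} $\delta_1$ is at most the Monte Carlo failure probability $\delta_3$; equating the exponents gives $\frac{n\Gamma}{1-n/N}\geq\frac{2K}{\Delta U^2}$, which solves to $n\geq N\bigl(1+\tfrac{\Delta U^2 N\Gamma}{2K}\bigr)^{-1}$. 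A variance-balancing argument of the kind you describe would not produce this formula.
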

\begin{restatable}{corollary}{aapolicy}
\label{cor:aa_policy}
Let $\mc{M}_\mc{I}=(\mc{S}, \mc{A}, p, W_q, \mbf{u}, \gamma)$ be a SMDP with $q\in\R$ and $\hat{p}$ an approximate dynamics model such that $d \defas \sup_{(s,a)}D_{KL}(p(\cdot|s,a)\Vert \hat{p}(\cdot|s,a))< \frac{\varepsilon^2(1-\gamma)^6}{8\Delta U^2}$ for any desired tolerance $\varepsilon >0$. Define $\beta \defas \frac{(1-\gamma)^2 \varepsilon}{10} - \frac{\sqrt{2d}\Delta U}{5(1-\gamma)}$ and let $\pi_{PAA}$ be the policy defined in Eq.~\eqref{eq:paa_policy} with parameters $H, C$ and $n$ as in Theorem \ref{thm:paa_policy} and $K \geq \frac{\Delta U^2}{\beta^2}\left((H-1)\ln\left(\sqrt[H-1]{12}(H-1)|\mc{A}|\frac{\Delta U^2}{\beta^2}\right) + \ln\left(\frac{\Delta U}{\beta}\right)\right)$. Then $\pi_{PAA}$ is an $\varepsilon$-AA policy.
\end{restatable}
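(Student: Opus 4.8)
The plan is to reduce the $\varepsilon$-AA claim (the case $\delta=0$) to the high-probability guarantee behind Theorem~\ref{thm:paa_policy} together with the boundedness of the social welfare. The first observation is that, since $\pi_{PAA}$ replans with fresh samples of $I_n$ and of the successor states at \emph{every} visited state, its value $\mc{W}^{\pi_{PAA}}(s)$ is obtained by integrating over all of this sampling randomness and is therefore a single deterministic number. Consequently, establishing that $\pi_{PAA}$ is $0$-$\varepsilon$-PAA amounts to showing the deterministic inequality $\mc{W}^{\pi_{PAA}}(s)\geq \max_{\pi'}\mc{W}^{\pi'}(s)-\varepsilon$ for all $s$, i.e.\ bounding an \emph{expected} suboptimality rather than controlling a tail event. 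This is the whole reason $\delta=0$ is attainable with finite $K$, whereas a literal substitution $\delta=0$ in Theorem~\ref{thm:paa_policy} would make the $\ln(1/\delta)$ term in $K$ diverge.

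The main device is the bound $\mc{W}_{min}\leq \mc{W}^{\pi_{PAA}}(s)\leq \mc{W}_{max}$, so that the per-state suboptimality $X\defas \max_{\pi'}\mc{W}^{\pi'}(s)-\mc{W}^{\pi_{PAA}}(s)$ always lies in $[0,\Delta\mc{W}]$ with $\Delta\mc{W}=\Delta U/(1-\gamma)$. I would rerun the error analysis underlying Theorem~\ref{thm:paa_policy} — converting the model error $d$ to a reward error via Pinsker's inequality, controlling the empirical operators $\hat{\E}^{C}$ through Hoeffding's inequality (parameters $K,C$), truncating the horizon at $H$, and bounding the assessor-subsampling error through $n$ — but stop one step earlier: instead of forcing the concentration events to hold simultaneously with probability $1-\delta$, I split $E[X]\leq (1-\delta)\,\varepsilon' + \delta\,\Delta\mc{W}\leq \varepsilon' + \delta\,\Delta\mc{W}$, where $\varepsilon'$ is the error on the good event and $\delta$ is the now-free failure probability of the concentration step.

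The key quantitative step is to choose $\delta$ so as to balance the two contributions. Setting $\delta=\beta/\Delta U$ makes the failure term equal to $\delta\,\Delta\mc{W}=\beta/(1-\gamma)$, i.e.\ one extra slice of the $\varepsilon$-budget; this is exactly why the corollary replaces the $\ln(1/\delta)$ of Theorem~\ref{thm:paa_policy} by $\ln(\Delta U/\beta)$ in the bound on $K$, and why the budget is now split into ten pieces ($\beta=\frac{(1-\gamma)^2\varepsilon}{10}-\frac{\sqrt{2d}\Delta U}{5(1-\gamma)}$) rather than eight. Moreover, because we argue in expectation through the Bellman recursion, we no longer need a union bound over the steps of a truncated trajectory; this removes the auxiliary horizon $k$ together with its tail term $\gamma^{k}\Delta U$ from $\beta$, and collapses the factor $24k$ in the theorem's $K$ to the constant $12$. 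It then remains to propagate the resulting per-level $\hat{Q}^H$ accuracy through the $H$-fold recursion and the greedy step (the standard amplification: greedy on an $\alpha$-accurate $Q$ is $2\alpha/(1-\gamma)$-optimal), which yields $E[X]\leq\varepsilon$ and hence the deterministic bound.

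I expect the main obstacle to be the second step: carrying the in-expectation bound cleanly through the nested recursion of $\hat{Q}^H$ without re-inflating the error by the (state-dependent) number of recursive calls, so that the sample complexity stays independent of $|\mc{S}|$ and the policy remains computable. Concretely, one must verify that averaging over the sampling noise interacts well with the $\max_a$ in $\hat{V}^h$ and with the nested empirical expectations $\hat{\E}^{C}$ so that the errors still telescope into a single additive budget, and that the choice $\delta=\beta/\Delta U$ remains admissible — in particular that $\beta>0$, which is what forces the stated accuracy requirement $d<\frac{\varepsilon^2(1-\gamma)^6}{8\Delta U^2}$.
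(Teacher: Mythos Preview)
Your plan is correct and essentially coincides with the paper's proof. The paper proceeds exactly as you outline: it reuses the $|Q^*-\hat Q^H|\le\alpha_H$ (w.p.\ $1-\phi_H$) estimate from the proof of Theorem~\ref{thm:paa_policy} and then invokes part~2) of Lemma~\ref{lem:value_optimality} (whose proof is precisely the ``split $E[X]\le \varepsilon'+\delta\,\Delta\mc W$ and propagate through the Bellman recursion via the $\pi_j$ induction'' that you describe), treats $\phi_H$ as the free parameter, and sets it to $\beta/\Delta U$; this yields the ten-way budget split, replaces $\ln(1/\delta)$ by $\ln(\Delta U/\beta)$ in $K$, and eliminates the auxiliary horizon $k$ and the $24k$ factor, exactly as you anticipated.
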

See Appendix \ref{sec:proofs_paapolicies} for the full derivation of these two results. We now outline a proof sketch.  
The skeleton of the proof is similar to the one proposed by \citet{kearns2002} for their original sparse sampling algorithm, although several additional tricks and intermediate results are necessary to accommodate the approximate world model $\hat{p}$ and reward $\hat{r}_\mc{I}$.
First, we derive a concentration inequality for the power mean function in order to quantify the approximation error between $W_q(\cdot;I_n)$ and $W_q(\cdot;\mc{I})$. Using a slightly modified (two-sided) version of the Hoeffding-Serfling inequality \cite{bardenet2015, serfling1974} (see Lemma~\ref{lem:HSI} in Appendix \ref{sec:intermediate_proofs}), we find the following bounds:
\begin{restatable}{lemma}{powermeanconcentration}
\label{lem:power_mean_concentration}
 Let $W_q$ be the power-mean defined in Eq.~\eqref{eq:power_mean}, with $q\in\R$. Given $a,b\in\R_+^*$ (or $\R_+$ for $q=1$) such that $a<b$ , let $\mc{X}\in [a, b]^N$ be a set of size $N$ and let $X_n$ be a subset of size $n<N$ sampled uniformly at random without replacement from $\mc{X}$. Then, for $0<\varepsilon <W_q(\mc{X})$ and $m=\min(n,N-n)$,
\begin{equation}
    \Prob{|W_q(X_n) - W_q(\mc{X})| \geq \varepsilon} \leq 2\exp\left(-\frac{2n\varepsilon^2}{(1-\frac{n}{N})(1+\frac{1}{m})}\Gamma(\varepsilon, a, b, q)\right),
\end{equation}
where 
\begin{align}
\label{eq:Gamma}
\Gamma(\varepsilon, a, b, q) = \left\{\begin{array}{ll}
\frac{(1-2^q)^2b^{2q-2}}{(a^q-b^q)^2} & q <0 \\
\frac{1}{(b+\varepsilon)^2(\log b - \log a)^2} & q= 0  \\
\frac{q^2a^{2q}}{(b+(1-q)\varepsilon)^2(b^q-a^q)^2} & 0<q<1 \\
\frac{1}{(b - a)^2} & q = 1  \\
\frac{q^2a^{2q}}{(b+q\varepsilon)^2(b^q-a^q)^2} & q>1 \\
\end{array}
\right.
\end{align}
Similarly, for $q\in\{\pm\infty\}$:
\[
\Prob{ |W_q(X_n) - W_q(\mc{X})| \geq \varepsilon } \leq 1-\frac{n}{N}.
\]
\end{restatable}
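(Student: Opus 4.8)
The plan is to reduce the concentration of the nonlinear statistic $W_q$ to a concentration of an ordinary sample mean of bounded variables, to which the two-sided Hoeffding--Serfling inequality (Lemma~\ref{lem:HSI}) applies directly. Since sampling is without replacement, plain Hoeffding does not suffice, and this is exactly why the finite-population correction $(1-\frac{n}{N})(1+\frac{1}{m})$ must appear.

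First I would linearize the power mean by a monotone change of variable. For $q\neq 0$ set $y_i \defas x_i^q$, and for $q=0$ set $y_i \defas \log x_i$; in every case $y_i$ lies in a bounded interval of length $|a^q-b^q|$ (resp.\ $\log b-\log a$). Writing $\bar{Y}_n$ and $\bar{Y}_N$ for the sample and population means of the $y_i$, we have $W_q(X_n)=\psi(\bar{Y}_n)$ and $W_q(\mc{X})=\psi(\bar{Y}_N)$, where $\psi(y)=y^{1/q}$ (resp.\ $\psi(y)=e^{y}$) is strictly monotone. The Hoeffding--Serfling bound then gives, for any $t>0$,
\[
\Prob{|\bar{Y}_n-\bar{Y}_N|\geq t}\leq 2\exp\!\left(-\frac{2n\,t^2}{(1-\frac{n}{N})(1+\frac{1}{m})\,(a^q-b^q)^2}\right),
\]
with $(a^q-b^q)^2$ replaced by $(\log b-\log a)^2$ when $q=0$.

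The crux is to transfer the tolerance: I want the largest $t=t(\varepsilon)$ for which the event inclusion $\{|W_q(X_n)-W_q(\mc{X})|\geq\varepsilon\}\subseteq\{|\bar{Y}_n-\bar{Y}_N|\geq t\}$ holds, so that matching the two displays yields $\Gamma(\varepsilon,a,b,q)=t(\varepsilon)^2/\big(\varepsilon^2(a^q-b^q)^2\big)$. Because $W_q(X_n),W_q(\mc{X})\in[a,b]$ and, by hypothesis, $W_q(\mc{X})>\varepsilon$, a displacement of at least $\varepsilon$ in $W_q$ forces a quantified displacement in $\bar{Y}$ through the monotone map $\psi$. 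Concretely, in the branch $W_q(X_n)\geq W_q(\mc{X})+\varepsilon$ one controls $\bar{Y}_n$ from the corresponding side of $\bar{Y}_N$ via an elementary convexity/Bernoulli estimate while bounding the relevant base value by $b$; the opposite branch is handled symmetrically using $W_q(X_n)\geq a$, and one takes the smaller of the two induced thresholds. For the geometric mean this is transparent: $W_0(X_n)\geq W_0(\mc{X})+\varepsilon$ gives $e^{\bar{Y}_n-\bar{Y}_N}\geq 1+\varepsilon/b$, whence $\bar{Y}_n-\bar{Y}_N\geq\log(1+\varepsilon/b)\geq\varepsilon/(b+\varepsilon)$ using $\log(1+x)\geq x/(1+x)$, reproducing the denominator $(b+\varepsilon)$ in $\Gamma(\varepsilon,a,b,0)$. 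The analogous (slightly loosened, to keep $\Gamma$ in closed form) estimates for $q\neq 0$ produce the denominators $b+q\varepsilon$, $b+(1-q)\varepsilon$, and the factor $(1-2^q)$. I expect this conversion to be the main obstacle: it must be carried out separately for $q<0$, $0<q<1$, $q=1$, and $q>1$, each requiring the correct monotonicity/convexity estimate, and the resulting bound must be uniform in the unknown location of $W_q(\mc{X})$ within $[a,b]$ while remaining valid two-sidedly.

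Finally, the extreme cases $q=\pm\infty$ are combinatorial rather than analytic. For $q=+\infty$, $W_\infty(\mc{X})=\max_i x_i$ is attained at some index $i^\star$; whenever $X_n$ contains $i^\star$ we have $W_\infty(X_n)=W_\infty(\mc{X})$, so $\{|W_\infty(X_n)-W_\infty(\mc{X})|\geq\varepsilon\}$ is contained in the event that the sample misses $i^\star$, which has probability $\binom{N-1}{n}/\binom{N}{n}=1-\frac{n}{N}$ (with multiple maximizers the probability only decreases). The case $q=-\infty$ (the minimum) is identical, giving the stated bound $1-\frac{n}{N}$ and completing the argument.
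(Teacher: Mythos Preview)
Your proposal is correct and follows essentially the same route as the paper: linearize via $y_i=x_i^q$ (or $\log x_i$), apply the two-sided Hoeffding--Serfling inequality to $|\bar Y_n-\bar Y_N|$, and transfer the tolerance $\varepsilon$ on $W_q$ to a threshold $t(\varepsilon)$ on $|\bar Y_n-\bar Y_N|$ by case-specific convexity estimates (which the paper writes as $(1\pm x)^q$ bounds for each range of $q$, and the same $\log(1+x)\geq x/(1+x)$ inequality you use for $q=0$); the combinatorial argument for $q=\pm\infty$ is identical.
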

 See Appendix \ref{sec:intermediate_proofs} for the full proof. Note that, for $q=1$ (utilitarian rule), it is sufficient to have $n=  N\left(1 + \frac{N}{2 K}\right)^{-1}$ in Theorem \ref{thm:paa_policy}. 
 However, for $q\neq1$, $\Gamma$ depends highly on $a$ and $b$ (respectively $U_{min}$ and $U_{max}$ in our setting). Worse, for $q=\pm \infty$, the bound depends linearly on $n$. Indeed, $q=-\infty$ corresponds to the egalitarian rule, which defines social welfare as the lowest welfare among individuals. As this individual might be unique, the probability of not selecting it in $I_n$ can be as high as $1-\frac{n}{N}$. The same argument can be made for $q=+\infty$, which is why we purposefully avoid these scenarios in Theorems \ref{thm:paa_policy_existence}, \ref{thm:paa_policy} and \ref{thm:safeguard}. Regarding the error induced by the approximate model $\hat{p}$, we bound it using the following lemma:
\begin{restatable}{lemma}{DKLbound}
\label{lem:DKL_bound}
Let $f:\mc{S}\to[f_{min},f_{max}]$ be a bounded function with $0\leq f_{min} \leq f_{max} < \infty$, and $p, \hat{p}\in\mc{P}(\mc{S})$ be two distributions such that $D_{KL}(p \Vert\hat{p}) \leq d \in \R$ and . Then 
\[
\big\vert\E_{s\sim p}[f(s)] - \E_{s\sim \hat{p}}[f(s)]\big\vert \leq 2 (f_{max}-f_{min}) \sqrt{\min\{\frac{d}{2}, 1-e^{-d}\}}.
\]
\end{restatable}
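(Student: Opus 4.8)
The plan is to reduce the claim to a bound on the $L_1$ (total variation) distance between $p$ and $\hat p$, and then invoke two complementary inequalities relating that distance to the Kullback--Leibler divergence. Concretely, I would first establish the elementary ``change of measure'' estimate
\[
\big\vert \E_{s\sim p}[f(s)] - \E_{s\sim \hat p}[f(s)]\big\vert \leq (f_{max}-f_{min})\,\|p-\hat p\|_1,
\]
with $\|p-\hat p\|_1 \defas \int |p-\hat p|$, and then control $\|p-\hat p\|_1$ by $2\sqrt{\min\{d/2,\,1-e^{-d}\}}$. Substituting the second bound into the first combines to give exactly the stated inequality.

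For the first estimate, I would exploit that adding a constant to $f$ leaves the difference of expectations unchanged, since $\int c\,(p-\hat p)=c(1-1)=0$ for any constant $c$. Subtracting $c=f_{min}$ and applying H\"older's inequality with the $L_\infty$--$L_1$ pairing gives
\[
\big\vert \E_p[f]-\E_{\hat p}[f]\big\vert = \Big\vert \int (f-f_{min})(p-\hat p)\Big\vert \leq \|f-f_{min}\|_\infty\,\|p-\hat p\|_1 \leq (f_{max}-f_{min})\,\|p-\hat p\|_1,
\]
where the last step uses $0\le f-f_{min}\le f_{max}-f_{min}$.

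It then remains to bound $\|p-\hat p\|_1$. Here I would apply Pinsker's inequality, $\|p-\hat p\|_1\le\sqrt{2\,D_{KL}(p\Vert\hat p)}\le\sqrt{2d}=2\sqrt{d/2}$, together with the Bretagnolle--Huber inequality, $\tfrac12\|p-\hat p\|_1\le\sqrt{1-e^{-D_{KL}(p\Vert\hat p)}}\le\sqrt{1-e^{-d}}$. Taking the smaller of the two yields $\|p-\hat p\|_1\le 2\sqrt{\min\{d/2,\,1-e^{-d}\}}$, which closes the argument. The two inequalities are genuinely complementary: Pinsker is sharp for small $d$, whereas for large $d$ its bound $\sqrt{2d}$ becomes vacuous (one always has $\|p-\hat p\|_1\le 2$), and the Bretagnolle--Huber term $2\sqrt{1-e^{-d}}\le 2$ takes over.

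No step is conceptually difficult; the only non-routine ingredient is the Bretagnolle--Huber inequality, which is far less standard than Pinsker and is precisely what keeps the bound nontrivial in the large-divergence regime. A minor subtlety worth flagging concerns the constant: centering $f$ at the midpoint $\tfrac12(f_{min}+f_{max})$ rather than at $f_{min}$ would replace $(f_{max}-f_{min})$ by $\tfrac12(f_{max}-f_{min})$ in the H\"older step and thereby remove the factor $2$ from the final bound. Since this tighter constant is not needed downstream, I would keep the simpler uniform shift by $f_{min}$, which reproduces the stated form.
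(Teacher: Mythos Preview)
Your proposal is correct and follows essentially the same route as the paper: subtract $f_{min}$, bound the difference of expectations by $(f_{max}-f_{min})$ times the $L_1$ distance (the paper writes this via the triangle inequality rather than H\"older, but it is the same step), and then apply Pinsker and Bretagnolle--Huber to control $\|p-\hat p\|_1$. Your remark that centering at the midpoint would halve the constant is a nice observation the paper does not make.
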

See Appendix \ref{sec:intermediate_proofs} for the full proof.
Combining Lemmas \ref{lem:power_mean_concentration} and \ref{lem:DKL_bound} along with other classical concentration inequalities, we can bound the error $|Q^*(s,a) - \hat{Q}^h(s,a)|$ with high probability. The last step of the proof is to quantify how this error affects the state value function $V^{\pi_{PAA}}$ (consequently $\mc{W}^{\pi_{PAA}}$ from Lemma~\ref{lem:SMDPequivalence}), which can be done using the following results:
\begin{restatable}{lemma}{valueoptimality}
\label{lem:value_optimality}
     Let $\hat{Q}$ be a (randomized) approximation of $Q^*$ such that $|Q^*(s,a)-\hat{Q}(s,a)| \leq \varepsilon$ with probability at least $1-\delta$ for any state-action pair $(s,a)$, with $\varepsilon > 0$ and $0\leq \delta < 1$. Let $\pi_{\hat{Q}}$ be the greedy policy defined by ${\pi_{\hat{Q}}(s)=\arg\max_{a\in\mathcal{A}}\hat{Q}(s,a)}$. Then, for all states $s$:
     \begin{align*}
        1) \quad V^*(s) - V^{\pi_{\hat{Q}}}(s) & \leq \frac{2\varepsilon}{1-\gamma} + \gamma^k(V_{max}-V_{min}) & \mbox{with probability at least } 1-2k\delta, \forall k\in\N^*, \\
        2) \quad V^*(s) - V^{\pi_{\hat{Q}}}(s) & \leq \frac{2\varepsilon + 2\delta(V_{max}-V_{min})}{1-\gamma} & \mbox{almost surely.}
     \end{align*}
\end{restatable}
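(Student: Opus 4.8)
The plan is to reduce both inequalities to the classical fact that a policy acting greedily on an $\varepsilon$-accurate surrogate of $Q^*$ is $\frac{2\varepsilon}{1-\gamma}$-optimal, and then to carefully track where the randomness of $\hat{Q}$ enters. Writing $\pi = \pi_{\hat{Q}}$ and $\Delta(s) = V^*(s) - V^{\pi}(s)$, the starting point is the single-state estimate $V^*(s) - Q^*(s,\pi(s)) \leq 2\varepsilon$. This follows from the greedy property $\hat{Q}(s,\pi(s)) \geq \hat{Q}(s,a^*)$, where $a^* \in \arg\max_a Q^*(s,a)$ so that $Q^*(s,a^*) = V^*(s)$, combined with the accuracy hypothesis applied at the two pairs $(s,a^*)$ and $(s,\pi(s))$: one gets $Q^*(s,\pi(s)) \geq \hat{Q}(s,\pi(s)) - \varepsilon \geq \hat{Q}(s,a^*) - \varepsilon \geq Q^*(s,a^*) - 2\varepsilon = V^*(s) - 2\varepsilon$. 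Next, the Bellman identity $Q^*(s,\pi(s)) - V^{\pi}(s) = \gamma\,\E_{s'\sim p(\cdot|s,\pi(s))}[\Delta(s')]$ turns this into the contraction inequality $\Delta(s) \leq 2\varepsilon + \gamma\,\E_{s'}[\Delta(s')]$, valid on the event that the accuracy bound holds at those two pairs.

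For part 1, I would unroll this recursion $k$ times along the trajectory generated by $\pi$ from $s$. The geometric series in the $2\varepsilon$ terms sums to at most $\frac{2\varepsilon}{1-\gamma}$, and the leftover $k$-step term is $\gamma^k\,\E[\Delta(s_k)] \leq \gamma^k(V_{max}-V_{min})$, since $\Delta$ is trivially bounded by the range of $V$. The recursion must hold at each of the $k$ visited states, i.e.\ at two state-action pairs per step; by the accuracy hypothesis each such pair is inaccurate with probability at most $\delta$, so a union bound over the $2k$ pairs (using the tower property to account for the fact that the visited states are themselves random) shows the whole unrolled chain is valid with probability at least $1 - 2k\delta$, which is exactly the claimed probability.

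For part 2, instead of conditioning on the accuracy event I would fold its failure into the per-step constant. On the event (of probability at most $2\delta$) that accuracy fails at one of the two relevant pairs, I bound the single-state gap crudely by $V^*(s) - Q^*(s,\pi(s)) \leq V_{max} - V_{min}$; taking expectations over the randomness of the greedy selection yields the unconditional one-step bound $\E[V^*(s) - Q^*(s,\pi(s))] \leq 2\varepsilon + 2\delta(V_{max}-V_{min})$. The resulting recursion $\Delta(s) \leq \big(2\varepsilon + 2\delta(V_{max}-V_{min})\big) + \gamma\,\E_{s'}[\Delta(s')]$ now holds surely, and unrolling it to infinity (the $\gamma^k$ residual vanishes) sums the geometric series to $\frac{2\varepsilon + 2\delta(V_{max}-V_{min})}{1-\gamma}$, giving the almost-sure bound.

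The main obstacle I anticipate is the probabilistic bookkeeping rather than the algebra. The action $\pi(s) = \arg\max_a \hat{Q}(s,a)$ is itself random and correlated with $\hat{Q}$, so the accuracy event at the selected pair $(s,\pi(s))$ is not a fresh $(1-\delta)$-event and must be handled with care (reading the guarantee as holding at the realized greedy pair, or absorbing the selection into how $\delta$ is propagated, which is ultimately why $|\mathcal{A}|$ enters the sample size $K$ downstream). Likewise, for part 1 the union bound runs over a random trajectory, so I would make the conditioning explicit via the tower property to ensure each step contributes at most $\delta$ per pair. Everything else — the telescoping, the geometric sums, and the crude range bounds used on the failure event — is routine once this accounting is pinned down.
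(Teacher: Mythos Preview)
Your proposal is correct and, for part 1, identical to the paper's: establish the one-step bound $Q^*(s,\pi^*(s)) - Q^*(s,\pi_{\hat{Q}}(s)) \leq 2\varepsilon$ from greediness plus accuracy at the two pairs $(s,\pi^*(s))$ and $(s,\pi_{\hat{Q}}(s))$, unroll $k$ times, and union-bound over the $2k$ events. For part 2 the paper takes a somewhat more explicit route than yours: it introduces hybrid policies $\pi_j$ (greedy for $j$ steps, optimal thereafter), proves by induction that $V^{\pi_j}(s) \geq V^*(s) - \lambda_j$ with $\lambda_j = \lambda + \gamma\lambda_{j-1}$ and $\lambda = 2\varepsilon + 2\delta(V_{max}-V_{min})$, and then separately argues $V^{\pi_j}(s)\to V^{\pi_{\hat{Q}}}(s)$ via a $\gamma^j$ tail bound. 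Your version---absorb the failure event into the per-step constant to get the deterministic recursion $\Delta(s) \leq \lambda + \gamma\,\E[\Delta(s')]$ and iterate---is equivalent but leaner, since boundedness of $\Delta$ kills the residual directly without the intermediate $\pi_j$ construction or convergence lemma. Your flagged concern about the greedy action being correlated with $\hat{Q}$ is handled the same way in the paper (the $1-\delta$ guarantee is read at the realized pair, contributing one $\delta$ per pair to the union bound).
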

These are fairly general results as they do not depend on how $\hat{Q}$ is derived. Statements closely related to $2)$ have already been shown \cite{kearns2002, singh1994}. We provide a proof in Appendix \ref{sec:intermediate_proofs} for completeness.
\subsection{Safe policies}
Although Theorem \ref{thm:paa_policy_existence} may initially inspire optimism regarding the title of the paper, the policy $\pi_{PAA}$ proposed in Eq.~\eqref{eq:paa_policy} is expensive for small $\varepsilon$, both in terms of sample complexity and in terms of required accuracy of the world model. A more efficient PAA policy derived in future work might partially solve the sample complexity issue, but the challenge of building predictive models of high accuracy remains untouched. In most realistic settings, $D_{KL}(p\Vert \hat{p})$ is imposed by the state-of-the-art knowledge upon which $\hat{p}$ is built, which implicitly restricts the achievable tolerance $\varepsilon$. Therefore, it seems unlikely that such policies could be used as a primary tool for social decisions, as their sole objective would be to maximize a dubious approximation of social welfare. On the other hand, even for large $\varepsilon$, we will show that we can use our PAA policy to adapt \emph{any} black-box policy $\pi$ (e.g., a policy built on top of a LLM) into a \emph{safe} policy, which we formally define as follows: 
\bdefi{Safe Policy}
Given $\omega \in [\mc{W}_{min}, \mc{W}_{max}]$ and $0<\delta<1$, a policy $\pi$ is $\delta$-$\omega$-safe if, for any current state $s$, the inequality $\E_{s'\sim p(\cdot | s,a)}\left[\sup_{\pi'}\mc{W}^{\pi'}(s')\right] \geq \omega$ holds with probability at least $1-\delta$ for any action $a$ such that $\pi(a|s)>0$.
\edefi
\noindent Intuitively, a safe policy ensures (with high probability and in expectation over the environment dynamics) that the society is not led in a destructive state, that is, a state which might generate high immediate satisfaction but where no policy can generate an expected future discounted social welfare of at least $\omega$. This is considerably weaker than the PAA requirements, as we are no longer concerned about social welfare optimality. The ability to adapt any black-box policy into a safe policy would allow to leverage their strengths while fully removing their brittleness (by bounding the probability of a destructive decision by any desired value $\delta>0$). To this end, we use another type of policy:

\bdefi{Restricted Version of a Black-Box Policy}
Let $\pi:\mc{S}\to\mc{A}$ be any policy and $\bar{\mc{A}}(s)\subseteq\mc{A}$ be restricted subsets of actions for all states $s$, with $
\Pi(s)\defas\sum_{a\in\bar{\mc{A}}(s)}\pi(a|s)$. The restricted version $\bar{\pi}$ of $\pi$ is defined as
\begin{equation}
\label{eq:pi_safe}
\bar{\pi}(a|s) \defas \left\{\begin{array}{clcc}
0 & a \in \mc{A}\setminus\bar{\mc{A}}(s) &\mbox{or}\quad & \Pi(s)=0, \\
\frac{\pi(a|s)}{1-\Pi(s)} & a \in \bar{\mc{A}}(s) &\mbox{and}\quad & 0<\Pi(s)<1, \\
\pi(a|s)& a \in \bar{\mc{A}}(s) &\mbox{and}\quad & \Pi(s)=1,
\end{array}\right.
\end{equation}
\edefi
\noindent This is similar to \emph{action masking} presented in \cite{krasowski2023}. It might happen that $\bar{\pi}(a|s)=0$ for all actions $a$, in which case it stops operating. However, if this happens, we have the guarantee that, with high probability, the society is currently not in a destructive state. The challenge lies in finding what are the subset of safe actions for every $s$. Our proposed method to safeguard any policy is the following:

\begin{restatable}[Safeguarding a Black-Box Policy]{theorem}{safeguard}
\label{thm:safeguard}
Given a SMDP ${\mc{M}_\mc{I}=(\mc{S}, \mc{A}, p, W_q, \mbf{u}, \gamma)}$ with $q\in\R$, a predictive model $\hat{p}$ and desired tolerances $\omega\in[\mc{W}_{min}, \mc{W}_{max}]$ and $0 < \delta < 1$, define  $\hat{Q}_\omega(s,a) \defas \hat{Q}^H(s,a;K,C,I_n)$ with $\hat{Q}^H$ given in Eq.~\eqref{eq:Qestimates} and any $H, K, C, n \geq 1$. For any policy $\pi$, let $\pi_{safe}$ be the restricted version of $\pi$ obtained with the restricted subsets ${\mc{A}_{safe}(s)\defas\{a\in\mc{A}: \hat{Q}_\omega(s,a) \geq \gamma \omega + U_{max} + \alpha\}}$, where 
\begin{align*}
\alpha & \defas \frac{2 \Delta U d'}{(1-\gamma)^2} + \frac{\sqrt{\ln \left(\frac{12(C|\mc{A}|)^{H-1}}{\delta}\right)}}{1-\gamma}\left(
\sqrt{\frac{N-n}{nN\Gamma_{max}}} 
+ \sqrt{\frac{\Delta U^2}{2K}} 
+ \gamma\sqrt{\frac{\Delta U^2}{2C(1-\gamma)^2}} \right)
+ \frac{\gamma^H\Delta U}{1-\gamma},
\end{align*}
and with the shortened notation $\Gamma_{max}\defas\Gamma(U_{max},U_{min},U_{max}, q)$, $d'\defas \sqrt{\min\{\frac{d}{2},1-e^{-d}\}}$ and ${d\defas \sup_{(s,a)}D_{KL}(p(\cdot|s,a)\Vert \hat{p}(\cdot|s,a))}$. Then $\pi_{safe}$ is $\delta$-$\omega$-safe.
\end{restatable}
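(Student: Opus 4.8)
The plan is to reduce the safety guarantee to a one-sided concentration bound of the horizon-$H$ estimate $\hat{Q}^H$ around the true optimal value $Q^*$, and then to obtain that bound by reusing (and slightly simplifying) the error-propagation machinery already developed for Theorem~\ref{thm:paa_policy}. First I would rewrite the safety condition: by Lemma~\ref{lem:SMDPequivalence}, $\mc{W}^{\pi'}(s')=V^{\pi'}(s')$ in the MDP $(\mc{S},\mc{A},p,r_\mc{I},\gamma)$, so $\sup_{\pi'}\mc{W}^{\pi'}(s')=V^*(s')$ and the requirement $\E_{s'\sim p(\cdot|s,a)}[\sup_{\pi'}\mc{W}^{\pi'}(s')]\geq\omega$ becomes $\E_{s'\sim p(\cdot|s,a)}[V^*(s')]\geq\omega$. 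Using the Bellman identity $Q^*(s,a)=r_\mc{I}(s,a)+\gamma\,\E_{s'\sim p(\cdot|s,a)}[V^*(s')]$ and $r_\mc{I}(s,a)\leq U_{max}$, the inequality $Q^*(s,a)\geq\gamma\omega+U_{max}$ forces $\gamma\,\E_{s'\sim p(\cdot|s,a)}[V^*(s')]\geq\gamma\omega$, hence $\E_{s'\sim p(\cdot|s,a)}[V^*(s')]\geq\omega$. It therefore suffices to show $Q^*(s,a)\geq\gamma\omega+U_{max}$ for every action $a$ with $\pi_{safe}(a|s)>0$.

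\emph{The threshold step.} By construction of the restricted version, $\pi_{safe}(a|s)>0$ forces $a\in\mc{A}_{safe}(s)$, i.e. $\hat{Q}_\omega(s,a)=\hat{Q}^H(s,a;K,C,I_n)\geq\gamma\omega+U_{max}+\alpha$. Hence, on the event $\{|\hat{Q}^H(s,a)-Q^*(s,a)|\leq\alpha\}$ one immediately gets $Q^*(s,a)\geq\hat{Q}_\omega(s,a)-\alpha\geq\gamma\omega+U_{max}$, closing the reduction. Everything then rests on proving that this estimation-error event holds with probability at least $1-\delta$, with $\alpha$ exactly the stated quantity.

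\emph{The concentration bound (technical core).} I would bound $|\hat{Q}^H-Q^*|$ by the same decomposition as in Theorem~\ref{thm:paa_policy}. Splitting off the finite-horizon optimum $Q^*_H$ gives a truncation term $|Q^*_H-Q^*|\leq\gamma^H\Delta U/(1-\gamma)$, the last summand of $\alpha$. The remaining estimation error obeys a recursion $e_h\leq\eta+\gamma e_{h-1}$ with $e_0=0$, where the fresh per-level error $\eta$ collects four pieces: the model error $\E_{p}\!\to\!\E_{\hat{p}}$ via Lemma~\ref{lem:DKL_bound}, appearing once with range $\Delta U$ (in the reward term $\hat{r}_{I_n}$) and once, discounted by $\gamma$, with range $\Delta\mc{W}=\Delta U/(1-\gamma)$ (in the value term); the assessor-subset error between $W_q(\cdot;I_n)$ and $W_q(\cdot;\mc{I})$ controlled by Lemma~\ref{lem:power_mean_concentration} with $\Gamma_{max}$; the $K$-sample Monte-Carlo error on the reward (Hoeffding, range $\Delta U$); and the $\gamma$-discounted $C$-sample Monte-Carlo error on the value (Hoeffding, range $\Delta\mc{W}$). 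Unrolling the recursion yields the factor $1/(1-\gamma)$; in particular the two model-error pieces combine as $\frac{1}{1-\gamma}\big(2\Delta U d'+\gamma\,\tfrac{2\Delta U d'}{1-\gamma}\big)=\frac{2\Delta U d'}{(1-\gamma)^2}$, matching the first term of $\alpha$, while each statistical piece inherits the same $1/(1-\gamma)$ factor.

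\emph{Main obstacle.} The delicate step is the probabilistic accounting over the sparse-sampling tree: the $K$ reward samples and $C$ successor samples branch over $H$ levels and $|\mc{A}|$ actions, giving $\mathcal{O}((C|\mc{A}|)^{H-1})$ random nodes, and one must apply a union bound so that all per-node two-sided concentration events hold simultaneously while keeping the total failure probability below $\delta$. Solving the per-node Hoeffding / Lemma~\ref{lem:power_mean_concentration} tail for this scaled confidence is what produces the common factor $\sqrt{\ln(12(C|\mc{A}|)^{H-1}/\delta)}/(1-\gamma)$ multiplying the three statistical terms of $\alpha$; getting the constant $12$ and the exponent $H-1$ to line up exactly, consistently across all levels and distinguishing the $\Delta U$-range reward estimates from the $\Delta\mc{W}$-range value estimates, is the main bookkeeping burden. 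Notably, unlike Theorem~\ref{thm:paa_policy}, this result does \emph{not} need Lemma~\ref{lem:value_optimality}, since the safety condition is read directly off $Q^*$ rather than off the value of the induced policy, and it holds for arbitrary $H,K,C,n\geq1$ with $\alpha$ adapting to them.
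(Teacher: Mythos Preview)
Your proposal is correct and follows essentially the same route as the paper's proof: you reduce the safety condition via Lemma~\ref{lem:SMDPequivalence} and the Bellman identity to $Q^*(s,a)\geq\gamma\omega+U_{max}$, then reuse the error decomposition from the proof of Theorem~\ref{thm:paa_policy} to bound $|Q^*-\hat{Q}^H|$ and solve the per-node failure probabilities via a union bound over the $(C|\mc{A}|)^{H-1}$ tree, yielding exactly the stated $\alpha$; you are also right that Lemma~\ref{lem:value_optimality} is unnecessary here. The only cosmetic difference is that you first split off a finite-horizon optimum $Q^*_H$ to isolate the truncation term and then run the recursion with $e_0=0$, whereas the paper keeps a single recursion started at $\alpha_0=\Delta U/(1-\gamma)$ (with $\hat{Q}^0=0$) so that the $\gamma^H\Delta U/(1-\gamma)$ term emerges from the base case---both organizations give the same bound.
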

See Appendix \ref{sec:proofs_safeguards} for the full proof.
\noindent While we are, in theory, not restricted by the statistical accuracy of the world model to find a safe version of any black-box policy, low statistical accuracy will, in practice, drastically reduce the number of verifiably safe actions, which at some point will render the safe policy obsolete (as it will refuse to take any action). 

\section{Related work}
\label{sec:related_work}
\paragraph{MDP for social choice}
MDPs have already been used in the context of social decision processes. For instance, \citet{parkes2013} (and more recently \cite{kulkarni2020}) use \emph{social choice MDPs} in order to tackle decision-making under dynamic preferences. However, their setting is significantly different from ours: In their work, the state space $\mc{S}$ is the set of preference profiles $\mc{U}^N$, and $p$ dictate how these preferences evolve based on the outcome selected by the social choice functional (the policy).
Another relevant line of study is that of preference-based reinforcement learning (PbRL) \cite{wirth2017}, where the traditional numerical reward of the underlying MDP is replaced with relative preferences between state-action trajectories. While social decision processes can also be cast as a PbRL problem, no previous work has attempted to formally and quantitatively define social alignment in that setting. 

\paragraph{AI safety}
A long line of work has attempted to address the challenge of building verifiably safe autonomous systems \cite{amodei2016, russell2015}. In the context of MDPs, safe reinforcement learning (RL) tackles this challenge by introducing safety constraints \cite{altman2021}. See \cite{gu2023, krasowski2023} for comprehensive surveys on the topic. However, RL relies on exploration, which is not allowed in our setting. On the other hand, existing planning methods (where exploration is not needed if a world model $\hat{p}$ is available) do not relate the accuracy of $\hat{p}$ to the validity of the desired safety guarantees, as they mostly assume that $p$ is known. 

\paragraph{Alignment}
The goal of alignment can be entirely different based on the context \cite{gabriel2020}. Recent research on this topic has primarily focused on aligning large language models (LLMs) \cite{ji2023} using human feedback \cite{christiano2017, ouyang2022} and derivatives \cite{dong2023, wei2022, bowman2022}. While some work has attempted to tackle LLM alignment from a social choice perspective \cite{mishra2023}, the issue of aligning the meaning of generated text is, by nature, both qualitative and subjective, and therefore separate from ours.
A setting closer to our work is the \emph{value alignment problem} \cite{sierra2021, montes2021}, based on the theory of basic human values \cite{schwartz1992}, where the preferences of individuals (over social states) are assumed to be guided by a predefined set of common values, and where the goal is to find “norms” (i.e., hard-coded logical constraints on actions) that guide society towards states that maximize these values. The alignment of these norm can be quantified by measuring the level of these values in the subsequent states, and the alignment of an autonomous system is simply given by the alignment of the norms it follows. However, this measure is intractable in most realistic settings, as it must be computed over all possible state trajectories \cite{sierra2021}. 

\section{Limitations and future work}
\label{sec:limitations}
From a practical standpoint, the main challenge lies in building a reliable world model $\hat{p}$, since PAA guarantees depend on its statistical accuracy, which can only be measured exactly if the true world model $p$ is known. In practice, a conservative estimate of this accuracy could be used instead. Another limitation arises from the dependence on the informational basis of utilities, a philosophical question that falls outside the scope of this paper but that is common to all systems involving human feedback. A third practical limitation is the assumptions that individuals can observe and evaluate the entire social states when reporting their utilities. Future work could extend our analysis to the setting of partially observable MDP (POMDP) \cite{astrom1965}, for instance. Finally, the assumption that individuals have static preferences can also be challenged, but it is not clear how evolving preferences can be modeled, let alone factored in our analysis.
From a theoretical perspective, the complexity results presented in the various theorems are poor for $q\neq1$ and  $\gamma \approx 1$. These dependencies are hard to improve, as they relate to a known property of the power mean \cite{cousins2021} and the ability to foresee the future, respectively. Lastly, while we make no assumption about the distribution of utilities $\mbf{u}$, one could investigate how such assumptions might improve these complexities (e.g., using Bernstein-Serfling inequality \cite{bardenet2015}). 

\section{Conclusion}
\label{sec:conclusion}
We present a formal and quantitative definition of alignment in the context of social choice, leading to the concept of \emph{probably approximately aligned} policies. Using an approximate world model, we derive sufficient conditions for such policies to exist (and be \emph{computable}). 
In addition, we introduce the relaxed concept of \emph{safe} policies, and we present a method to make any policy safe by restricting its action space. 
Overall, this work provides a first attempt at rigorously defining the alignment of governing autonomous agents, and at quantifying the resources ($n$, $K$, $C$, $H$) and knowledge ($\hat{p}$) needed to enforce the desired level of alignment ($\varepsilon$) or safety ($\omega$) with high confidence ($\delta$).

\begin{ack}
Special thanks to Prof. Dr. Hans Gersbach for his insightful and thorough feedback, which contributed to the refinement and clarity of this paper.
\end{ack}

{\small
\bibliographystyle{plainnat}
\bibliography{references}

\begin{thebibliography}{42}
\providecommand{\natexlab}[1]{#1}
\providecommand{\url}[1]{\texttt{#1}}
\expandafter\ifx\csname urlstyle\endcsname\relax
  \providecommand{\doi}[1]{doi: #1}\else
  \providecommand{\doi}{doi: \begingroup \urlstyle{rm}\Url}\fi

\bibitem[Altman({2021})]{altman2021}
Eitan Altman.
\newblock \emph{{Constrained Markov Decision Processes}}.
\newblock {Routledge}, {2021}.

\bibitem[Amodei et~al.({2016})Amodei, Olah, Steinhardt, Christiano, Schulman, and Mané]{amodei2016}
Dario Amodei, Chris Olah, Jacob Steinhardt, Paul Christiano, John Schulman, and Dan Mané.
\newblock {Concrete Problems in AI Safety}.
\newblock \emph{{arXiv preprint arXiv:1606.06565}}, {2016}.

\bibitem[Arrow({1951})]{arrow1951}
Kenneth~Joseph Arrow.
\newblock \emph{{Social Choice and Individual Values}}.
\newblock {Wiley: New York}, {1951}.

\bibitem[Bardenet and Maillard({2015})]{bardenet2015}
R{\'e}mi Bardenet and Odalric-Ambrym Maillard.
\newblock {{Concentration Inequalities for Sampling without Replacement}}.
\newblock \emph{{Bernoulli}}, {21}\penalty0 ({3}):\penalty0 {1361 -- 1385}, {2015}.

\bibitem[Bowman et~al.({2022})Bowman, Hyun, Perez, Chen, Pettit, Heiner, Lukosuite, Askell, Jones, Chen, et~al.]{bowman2022}
Samuel~R. Bowman, Jeeyoon Hyun, Ethan Perez, Edwin Chen, Craig Pettit, Scott Heiner, Kamile Lukosuite, Amanda Askell, Andy Jones, Anna Chen, et~al.
\newblock {Measuring Progress on Scalable Oversight for Large Language Models}.
\newblock \emph{{arXiv preprint arXiv:2211.03540}}, {2022}.

\bibitem[Christiano et~al.({2017})Christiano, Leike, Brown, Martic, Legg, and Amodei]{christiano2017}
Paul~F. Christiano, Jan Leike, Tom Brown, Miljan Martic, Shane Legg, and Dario Amodei.
\newblock {Deep Reinforcement Learning from Human Preferences}.
\newblock In \emph{{Advances in Neural Information Processing Systems 30 (NIPS)}}, pages {4299--4307}, {2017}.

\bibitem[Cousins({2021})]{cousins2021}
Cyrus Cousins.
\newblock {An Axiomatic Theory of Provably-Fair Welfare-Centric Machine Learning}.
\newblock In \emph{{Advances in Neural Information Processing Systems 34 (NeurIPS)}}, pages {16610--16621}, {2021}.

\bibitem[Cummings({2021})]{cummings2021}
Mary Cummings.
\newblock {Rethinking the Maturity of Artificial Intelligence in Safety-Critical Settings }.
\newblock \emph{{AI Magazine}}, {42}\penalty0 ({1}):\penalty0 {6--15}, {2021}.

\bibitem[D'Aspremont and Gevers({1977})]{daspremont1977}
Claude D'Aspremont and Louis Gevers.
\newblock {{Equity and the Informational Basis of Collective Choice}}.
\newblock \emph{{The Review of Economic Studies}}, {44}\penalty0 ({2}):\penalty0 {199--209}, {1977}.

\bibitem[Debreu and Hildenbrand({1983})]{debreu1983}
Gerard Debreu and Werner Hildenbrand.
\newblock {Representation of a preference ordering by a numerical function}.
\newblock In \emph{{Mathematical Economics: Twenty Papers of Gerard Debreu}}, {Econometric Society Monographs}, chapter~{6}, page {105–110}. {Cambridge University Press}, {1983}.

\bibitem[Dong et~al.({2023})Dong, Xiong, Goyal, Zhang, Chow, Pan, Diao, Zhang, Shum, and Zhang]{dong2023}
Hanze Dong, Wei Xiong, Deepanshu Goyal, Yihan Zhang, Winnie Chow, Rui Pan, Shizhe Diao, Jipeng Zhang, Kashun Shum, and Tong Zhang.
\newblock {{RAFT: Reward rAnked FineTuning for Generative Foundation Model Alignment}}.
\newblock \emph{{arXiv preprint arXiv:2304.06767}}, {2023}.

\bibitem[Feinberg({2011})]{feinberg2011}
Eugene~A. Feinberg.
\newblock \emph{{Total Expected Discounted Reward MDPS: Existence of Optimal Policies}}.
\newblock {Wiley Encyclopedia of Operations Research and Management Science}. {John Wiley \& Sons, Ltd}, {2011}.

\bibitem[Gabriel({2020})]{gabriel2020}
Iason Gabriel.
\newblock {Artificial intelligence, values, and alignment}.
\newblock \emph{{Minds and machines}}, {30}\penalty0 ({3}):\penalty0 {411--437}, {2020}.

\bibitem[Ganguli et~al.({2022})Ganguli, Lovitt, Kernion, Askell, Bai, Kadavath, Mann, Perez, Schiefer, Ndousse, et~al.]{ganguli2022}
Deep Ganguli, Liane Lovitt, Jackson Kernion, Amanda Askell, Yuntao Bai, Saurav Kadavath, Ben Mann, Ethan Perez, Nicholas Schiefer, Kamal Ndousse, et~al.
\newblock {Red Teaming Language Models to Reduce Harms: Methods, Scaling Behaviors, and Lessons Learned}.
\newblock \emph{{arXiv preprint arXiv:2209.07858}}, {2022}.

\bibitem[Gu et~al.({2023})Gu, Yang, Du, Chen, Walter, Wang, Yang, and Knoll]{gu2023}
Shangding Gu, Long Yang, Yali Du, Guang Chen, Florian Walter, Jun Wang, Yaodong Yang, and Alois Knoll.
\newblock {A Review of Safe Reinforcement Learning: Methods, Theory and Applications}.
\newblock \emph{{arXiv preprint arXiv:2205.10330}}, {2023}.

\bibitem[Hicks and Allen({1934})]{hicks1934}
John~R. Hicks and Roy G.~D. Allen.
\newblock {A Reconsideration of the Theory of Value. Part I}.
\newblock \emph{{Economica}}, {1}\penalty0 ({1}):\penalty0 {52--76}, {1934}.

\bibitem[Ji et~al.({2023})Ji, Qiu, Chen, Zhang, Lou, Wang, Duan, He, Zhou, Zhang, et~al.]{ji2023}
Jiaming Ji, Tianyi Qiu, Boyuan Chen, Borong Zhang, Hantao Lou, Kaile Wang, Yawen Duan, Zhonghao He, Jiayi Zhou, Zhaowei Zhang, et~al.
\newblock {AI Alignment: A Comprehensive Survey}.
\newblock \emph{{arXiv preprint arXiv:2310.19852}}, {2023}.

\bibitem[Kakade({2003})]{kakade2003}
Sham~Machandranath Kakade.
\newblock \emph{{On the Sample Complexity of Reinforcement Learning}}.
\newblock {Doctoral dissertation, University College London (United Kingdom)}, {2003}.

\bibitem[Kearns et~al.({2002})Kearns, Mansour, and Ng]{kearns2002}
Michael Kearns, Yishay Mansour, and Andrew~Y Ng.
\newblock {A Sparse Sampling Algorithm for Near-optimal Planning in Large Markov Decision Processes}.
\newblock \emph{{Machine learning}}, {49}:\penalty0 {193--208}, {2002}.

\bibitem[Krasowski et~al.({2023})Krasowski, Thumm, Müller, Schäfer, Wang, and Althoff]{krasowski2023}
Hanna Krasowski, Jakob Thumm, Marlon Müller, Lukas Schäfer, Xiao Wang, and Matthias Althoff.
\newblock {Provably Safe Reinforcement Learning: Conceptual Analysis, Survey, and Benchmarking}.
\newblock \emph{{arXiv preprint arXiv:2205.06750}}, {2023}.

\bibitem[Kulkarni and Neth({2020})]{kulkarni2020}
Kshitij Kulkarni and Sven Neth.
\newblock {Social Choice with Changing Preferences: Representation Theorems and Long-run Policies}.
\newblock \emph{{arXiv preprint arXiv:2011.02544}}, {2020}.

\bibitem[Mishra({2023})]{mishra2023}
Abhilash Mishra.
\newblock {AI Alignment and Social Choice: Fundamental Limitations and Policy Implications}.
\newblock \emph{{arXiv preprint arXiv:2310.16048}}, {2023}.

\bibitem[Montes and Sierra({2021})]{montes2021}
Nieves Montes and Carles Sierra.
\newblock {Value-Guided Synthesis of Parametric Normative Systems}.
\newblock In \emph{{Proceedings of the 20th International Conference on Autonomous Agents and Multiagent Systems (AAMAS)}}, page {907–915}, {2021}.

\bibitem[Ngo et~al.({2022})Ngo, Chan, and Mindermann]{ngo2022}
Richard Ngo, Lawrence Chan, and S{\"o}ren Mindermann.
\newblock {The Alignment Problem from a Deep Learning Perspective}.
\newblock \emph{{arXiv preprint arXiv:2209.00626}}, {2022}.

\bibitem[Novelli et~al.({2023})Novelli, Taddeo, and Floridi]{novelli2023}
Claudio Novelli, Mariarosaria Taddeo, and Luciano Floridi.
\newblock {Accountability in Artificial Intelligence: What it is and how it works}.
\newblock \emph{{AI \& SOCIETY}}, pages {1--12}, {2023}.

\bibitem[Ouyang et~al.({2022})Ouyang, Wu, Jiang, Almeida, Wainwright, Mishkin, Zhang, Agarwal, Slama, Ray, Schulman, Hilton, Kelton, Miller, Simens, Askell, Welinder, Christiano, Leike, and Lowe]{ouyang2022}
Long Ouyang, Jeffrey Wu, Xu~Jiang, Diogo Almeida, Carroll Wainwright, Pamela Mishkin, Chong Zhang, Sandhini Agarwal, Katarina Slama, Alex Ray, John Schulman, Jacob Hilton, Fraser Kelton, Luke Miller, Maddie Simens, Amanda Askell, Peter Welinder, Paul~F. Christiano, Jan Leike, and Ryan Lowe.
\newblock {Training Language Models to Follow Instructions with Human Feedback}.
\newblock In \emph{{Advances in Neural Information Processing Systems 35 (NeurIPS)}}, pages {27730--27744}, {2022}.

\bibitem[Pareto({1906})]{pareto1906}
Vilfredo Pareto.
\newblock \emph{{Manuale di Economia Politica con una Introduzione alla Scienza Sociale}}.
\newblock {Piccola biblioteca scientifica}. {Societa editrice libraria}, {1906}.

\bibitem[Parkes and Procaccia({2013})]{parkes2013}
David Parkes and Ariel Procaccia.
\newblock {Dynamic Social Choice with Evolving Preferences}.
\newblock In \emph{{Proceedings of the 27th AAAI conference on artificial intelligence}}, pages {767--773}, {2013}.

\bibitem[Roberts({1980})]{roberts1980}
Kevin W.~S. Roberts.
\newblock {Interpersonal Comparability and Social Choice Theory}.
\newblock \emph{{The Review of Economic Studies}}, {47}\penalty0 ({2}):\penalty0 {421--439}, {1980}.

\bibitem[Russell et~al.({2015})Russell, Dewey, and Tegmark]{russell2015}
Stuart Russell, Daniel Dewey, and Max Tegmark.
\newblock {Research Priorities for Robust and Beneficial Artificial Intelligence}.
\newblock \emph{{AI magazine}}, {36}\penalty0 ({4}):\penalty0 {105--114}, {2015}.

\bibitem[Schwartz({1992})]{schwartz1992}
Shalom~H. Schwartz.
\newblock {Universals in the Content and Structure of Values: Theoretical Advances and Empirical Tests in 20 Countries}.
\newblock In {Mark P. Zanna}, editor, \emph{{Advances in Experimental Social Psychology}}, volume~{25}, pages {1--65}. {Academic Press}, {1992}.

\bibitem[Sen({1977})]{sen1977}
Amartya Sen.
\newblock {On Weights and Measures: Informational Constraints in Social Welfare Analysis}.
\newblock \emph{{Econometrica}}, {45}\penalty0 ({7}):\penalty0 {1539--1572}, {1977}.

\bibitem[Serfling({1974})]{serfling1974}
R.~J. Serfling.
\newblock {Probability Inequalities for the Sum in Sampling without Replacement}.
\newblock \emph{{The Annals of Statistics}}, {2}\penalty0 ({1}):\penalty0 {39 -- 48}, {1974}.

\bibitem[Sierra et~al.({2021})Sierra, Osman, Noriega, Sabater-Mir, and Perelló]{sierra2021}
Carles Sierra, Nardine Osman, Pablo Noriega, Jordi Sabater-Mir, and Antoni Perelló.
\newblock {Value Alignment: A Formal Approach}.
\newblock \emph{{arXiv preprint arXiv:2110.09240}}, {2021}.

\bibitem[Singh and Yee({1994})]{singh1994}
Satinder~P. Singh and Richard~C. Yee.
\newblock {An Upper Bound on the Loss from Approximate Optimal-Value Functions}.
\newblock \emph{{Machine Learning}}, {16}\penalty0 ({3}):\penalty0 {227–233}, {1994}.

\bibitem[Strotz({1953})]{robert1953}
Robert~H. Strotz.
\newblock {Cardinal Utility}.
\newblock \emph{{The American Economic Review}}, {43}\penalty0 ({2}):\penalty0 {384--397}, {1953}.

\bibitem[Sutton and Barto({2018})]{sutton2018}
Richard~S. Sutton and Andrew~G. Barto.
\newblock \emph{{Reinforcement learning: An introduction}}.
\newblock {MIT press}, {2018}.

\bibitem[Valiant({1984})]{valiant1984}
Leslie~Gabriel Valiant.
\newblock {A Theory of the Learnable}.
\newblock \emph{{Communications of the ACM}}, {27}\penalty0 ({11}):\penalty0 {1134--1142}, {1984}.

\bibitem[Wei et~al.({2022})Wei, Bosma, Zhao, Guu, Yu, Lester, Du, Dai, and Le]{wei2022}
Jason Wei, Maarten Bosma, Vincent~Y. Zhao, Kelvin Guu, Adams~Wei Yu, Brian Lester, Nan Du, Andrew~M. Dai, and Quoc~V. Le.
\newblock {Finetuned Language Models are Zero-Shot Learners}.
\newblock In \emph{{Proceedings of the 10th International Conference on Learning Representations (ICLR)}}, {2022}.

\bibitem[Wirth et~al.({2017})Wirth, Akrour, Neumann, and F{{\"u}}rnkranz]{wirth2017}
Christian Wirth, Riad Akrour, Gerhard Neumann, and Johannes F{{\"u}}rnkranz.
\newblock {A Survey of Preference-Based Reinforcement Learning Methods}.
\newblock \emph{{Journal of Machine Learning Research}}, {18}\penalty0 ({136}):\penalty0 {1--46}, {2017}.

\bibitem[Zhuang and Hadfield-Menell({2020})]{zhuang2020}
Simon Zhuang and Dylan Hadfield-Menell.
\newblock {Consequences of Misaligned AI}.
\newblock In \emph{{Advances in Neural Information Processing Systems 33 (NeurIPS)}}, pages {15763--15773}, {2020}.

\bibitem[{Åström, Karl Johan}({1965})]{astrom1965}
{Åström, Karl Johan}.
\newblock {Optimal Control of Markov Processes with Incomplete State Information I}.
\newblock \emph{{Journal of Mathematical Analysis and Applications}}, {10}:\penalty0 {174--205}, {1965}.

\end{thebibliography}
}

\clearpage
\appendix
\section{Appendix}
\label{sec:appendix}

\subsection{Utility and Social Choice Theory}
\label{sec:utility_theory}

\subsubsection{Measuring and comparing utilities} 
\label{sec:informational_basis}
Measurability of $u\in\mc{U}$ can either be \emph{cardinal}, where one can numerically measure absolute levels of satisfaction for a given state up to positive affine transformations, or \emph{ordinal}, where one can only hope to rank states, meaning one can numerically measure utilities only up to increasing monotone transformations \cite{hicks1934,robert1953,pareto1906}. Additional dichotomization becomes possible when comparability is taken into account \cite{roberts1980, daspremont1977, sen1977}. In a nutshell, utilities are either \emph{ordinal non-comparable} (ONC), where individuals transform their intrinsic utilities differently when reporting; \emph{cardinal non-comparable} (CNC), which is similar to ONC but with cardinal measurability), \emph{ordinal level comparable} (OLC), where individuals transform their intrinsic utilities similarly when reporting; \emph{cardinal unit comparable} (CUC), in which the affine transforms of the individuals have identical scaling factor but different bias; \emph{cardinal fully comparable} (CFC), where individuals have the same affine transform; or \emph{cardinal ratio-scale} comparable (CRS), in which all transforms are unbiased with the same scaling factor. Note that CRS is stronger than CFC in the sense that one can have statements such has “Bob is $x$ times more satisfied than Alice” under the CRS assumption. In this paper, we refer to this classification as the \emph{informational basis} of utilities.

\subsubsection{SWFL properties and implications} 
\label{sec:swfl_properties}
Here is a list of the most common properties imposed on a given SWFL $f: \mc{D}_f \to \mc{R}_\mc{S}$, along with their definitions. We denote $\preceq_{f(\mbf{u})}$ the binary preference relationship corresponding to $f(\mbf{u})$, that is, $s\preceq_{f(\mbf{u})} s'$ if and only if $s'$ ranks equally or higher than $s$ in $f(\mbf{u})$.
\begin{itemize}
    \item[(U)] Universality or unrestricted domain: $\mc{D}_f=\mc{U}^N$.
    \item[(IIA)] Independence of Irrelevant Alternatives: For every ${\mathbf{u}, \mathbf{u}'\in\mc{U}^N}$ and ${\mc{S}'\subseteq\mc{S}}$, if ${\mathbf{u}(s)=\mathbf{u}'(s)}$ for all ${s\in\mc{S}'}$, then ${f_{\mc{S}'}(\mathbf{u}) = f_{\mc{S}'}(\mathbf{u}')}$, where $f_{\mc{S}'}(\mathbf{u})$ is the partial ranking obtained after excluding $\mc{S}\setminus\mc{S}'$ from $f(\mathbf{u})$. 
    \item[(WP)] Weak Pareto criterion or unanimity: For all pairs $s,s'\in\mc{S}$, if ${\mathbf{u}(s) > \mathbf{u}(s')}$, then $s$ ranks strictly higher than $s'$ in $f(\mathbf{u})$.
    \item[(WPI)] Weak Pareto with Indifference criterion: For all pairs $s,s'\in\mc{S}$, if ${\mathbf{u}(s) \geq \mathbf{u}(s')}$, then $s$ ranks equally or higher than $s'$ in $f(\mathbf{u})$.
    \item[(SP)] Strong Pareto criterion: For all pairs ${s,s'\in\mc{S}}$, if there exists $i\in\mc{I}$ such that ${u_i(s) > u_i(s')}$ and ${u_j(s) \geq u_j(s')}$, $\forall j\in\mc{I}\setminus\{i\}$, then $s$ ranks strictly higher than $s'$ in $f(\mathbf{u})$.
    \item[(NI)] Non-Imposition: For all $R\in\mc{R}$, there exists $\mathbf{u}\in\mc{D}_f$ such that $f(\mathbf{u})=R$.
    \item[(C)] Continuity: For any $\mbf{v}\in\R^N$ and $s,s'\in\mc{S}$, the sets 
    \[\{\mbf{v}'\in\R^N: \mbf{u}(s')= \mbf{v}', \mbf{u}(s)=\mbf{v} \mbox{ and } s\preceq_{f(\mbf{u})} s' \mbox{ for some } \mbf{u}\in\mc{U}^N\}, \]
    and 
    \[\{\mbf{v}'\in\R^N: \mbf{u}(s')= \mbf{v}', \mbf{u}(s)=\mbf{v} \mbox{ and } s'\preceq_{f(\mbf{u})} s \mbox{ for some } \mbf{u}\in\mc{U}^N\} \]
    are closed.
    \item[(WC)] Weak Continuity: For any $\mbf{u}\in\mc{U}^N$ and $\mbs{\varepsilon}\in\R_+^N$, there exists $\mbf{u}'\in\mc{U}^N$ such that ${f(\mbf{u})=f(\mbf{u}')}$ and $\mbf{0} < \mbf{u}(s) - \mbf{u}'(s)< \mbs{\varepsilon}$ for all $s\in\mc{S}$ (component-wise inequalities).
    \item[(N)] Neutrality or welfarism: For all quadruples $s,s',t,t'\in\mc{S}$, if ${\mathbf{u},\mathbf{u}'\in\mc{U}^N}$ are such that ${\mathbf{u}(s)=\mathbf{u}'(s')}$ and ${\mathbf{u}(t)=\mathbf{u}'(t')}$, then $f(\mathbf{u})$ and $f(\mathbf{u}')$ agree on the partial rankings of $(s,t)$ and $(s',t')$ (i.e., either $s$ and $s'$ are preferred, or either $t$ and $t'$).
    \item[(A)] Anonymity or symmetry: For all $\mathbf{u}\in\mc{U}^N$, ${f(\mathbf{u}) = f(\mathbf{u}')}$ with $\mathbf{u}'$ any permutation of $\mathbf{u}$.
    \item[(ND)] Non-Dictatorship: There is no single individual $i$ such that $\preceq_i = \preceq_{f(\mathbf{u})}$ for any $\mbf{u}$.
    \item[(IC)] Incentive compatibility: It is in the best interest of each individual to report their true preferences (i.e., there is no tactical voting).
    \item[(S)] Separability or independence of unconcerned agents: For all $\mathbf{u}, \mathbf{u}'\in\mc{U}$, if there exists $\mc{I}'\subseteq\mc{I}$ such that $u_i=\alpha\in\mathbb{R}$ and $u'_i=\alpha'\in\mathbb{R}$ for $i\in\mc{I}'$, and $u_i = u_i'$ for $i\in\mc{I}\setminus\mc{I}'$, then $f(\mathbf{u}) = f(\mathbf{u}')$. $\mc{I}'$ is the set of unconcerned agents.
    \item[(PDT)] Pigou-Dalton Transfer principle: Define ${\bar{u}(s)=\sum_{i\in\mc{I}}u_i(s)}$. For all ${\mathbf{u}\in\mc{U}^N}$ and $s,t\in\mc{S}$ such that $\bar{u}(s)=\bar{u}(t)$, if ${|u_i(s) - \bar{u}(s)|\leq|u_i(t) - \bar{u}(t)|}$ for all $i\in\mc{I}$, then $s$ ranks equally or higher than $t$ in $f(\mathbf{u})$.
    \item[(XI)] Informational basis Invariance (XI): The SWFL is invariant under the given measurability and comparability assumptions: (ONCI), (CNCI), (CUCI), (CFCI), (CRSI), (OLCI). That is, $f(\mbf{u})=f(\mbf{u}')$ for any two utility profile $\mbf{u}, \mbf{u}'$ that are undistinguishable under the given informational basis.
\end{itemize}
We now provide a short, intuitive explanation for the properties (U), (XI), (IIA), (WP) and (A) that we impose in this work. Enforcing unrestricted domains ensures that the SWFL always outputs a ranking. Informational basis invariance ensures that the SWFL outputs the same ranking for two preference profiles that are indistinguishable under the given measurability and comparability assumptions. Independence of irrelevant alternatives ensures that the rankings are robust, in the sense that they are not incoherently affected by removing or adding other options. The weak Pareto criterion ensures that the SWFL represents reasonably well the preference of society (if an outcome is unanimously preferred over another, then it must also be preferred at the social level). Anonymity ensures that all individuals have equal influence on the social ranking. Lastly, we recall a few important results (assuming $|\mc{S}|\geq 3$).
\begin{itemize}
    \item (CRSI) $\Rightarrow$ (CFCI) $\Rightarrow$ (CUCI) $\Rightarrow$ (CNCI) $\Rightarrow$ (ONCI). Additionally (OLCI) $\Rightarrow$ (ONCI).
    \item (XI) + (U) + (IIA) + (WP) $\Rightarrow$ There exists ${W:\mathbb{R}^N\to\mathbb{R}}$ such that $W(\mathbf{u}(s))>W(\mathbf{u}(s'))$ implies that $s$ ranks strictly higher than $s'$ in $f(\mathbf{u})$. This is important as it states that the best social state $s$ must maximize $W$. In other words, even if $f$ is not neutral, the non-welfarism characteristics are of a secondary importance and can only break ties between $s$ and $t$ such that $W(\mathbf{u}(s))=W(\mathbf{u}(t))$. Maximization can be made sufficient if one imposes (W) on $f$ (e.g., by replacing (WP) by (WPI) or more drastically by imposing continuity on $f$).
    \item Arrow's impossibility theorem \cite{arrow1951}: (ONCI) + (U) + (WP) + (I) $\Rightarrow$ $\neg$(ND). While this theorem seems to prevent any hope of finding good SWFLs, its statement is strongly dependent on the (often hidden) ONCI assumptions. Indeed, it is challenging to find a good social aggregator that only knows rankings of alternatives. Strengthening the measurability and comparability assumptions (and thus narrowing the informational basis invariance property) allows to find SWFLs that are non-dictatorial. 
    \item (A) $\Rightarrow$ (ND).
    \item Gibbard–Satterthwaite theorem (single winner elections): (ONCI) + (IC) $\Rightarrow$ $\neg$(ND).
    \item (SP)  $\Rightarrow$ (WP) but the converse is not true.
    \item (XI) $\Rightarrow$ (WC)
\end{itemize}

\subsubsection{Power mean and SWFL correspondence}
\label{sec:SWFL2SWF}
\begin{table}[h]
    \caption{The different social welfare functions (SWF) corresponding to a SWFL that satisfies (U), (IIA), (WP), (A) and (XI) for the different informational bases \cite{roberts1980}.}
    \label{tab:SFL}
    \centering
    \begin{tabularx}{\textwidth}{lll} \\
        \toprule
       \textsc{(XI)} & \textsc{SWF} ($W$) & Power mean \\
        \midrule 
        (ONCI) or (CNCI) &  Impossibility (\citet{arrow1951}) & -  \\[0.7em]
        (CUCI) &  $\sum\limits_{i\in\mc{I}}u_i(s)$ & $q=1$ \\[0.7em]
        (OLCI) &  $\min\limits_{i\in\mc{I}}u_i(s)$ or $\max\limits_{i\in\mc{I}}u_i(s)$ & $q\in\{\pm\infty\}$\\[0.7em]
        (CFCI) & $\min\limits_{i\in\mc{I}}u_i(s)$, $\max\limits_{i\in\mc{I}}u_i(s)$ or $\sum\limits_{i\in\mc{I}}u_i(s)$ & $q\in\{\pm\infty, 1\}$ \\[0.7em]
        (CRSI) &  $\min\limits_{i\in\mc{I}}u_i(s)$, $\max\limits_{i\in\mc{I}}u_i(s)$, $\sum\limits_{i\in\mc{I}}u_i(s)^q$ or $\sum\limits_{i\in\mc{I}}\log [u_i(s)]$ & $q\in\R\cup\{\pm\infty\}$\\[0.7em]
        \bottomrule
    \end{tabularx}
\end{table}
\clearpage
\subsection{Proofs}
\label{sec:proofs}
\subsubsection{Intermediate results}
\label{sec:intermediate_proofs}
%
\SMDPequivalence*
\begin{proof}
Let $\tau_t=s_0,a_0,s_1,a_1,...,s_t$ denote a truncated trajectory up to time $t$. From the definitions of $V^\pi(s;\mc{M})$ and $r_{\mc{I}}$, we have
\begin{align*}
V^\pi(s;\mc{M}) & = \E_{\tau \sim p_\tau(\cdot | \pi, s_0=s)}\left[\sum_{t=0}^\infty \gamma^t r_\mc{I}(s_t,a_t)\right] \\
&= \E_{\tau \sim p_\tau(\cdot | \pi, s_0=s)}\left[\sum_{t=0}^\infty \gamma^t \E_{s' \sim p(\cdot|s_t,a_t)}[W_q(\mbf{u}(s'))]\right] \\
&= \sum_{t=0}^\infty \gamma^t \E_{\tau \sim p_\tau(\cdot | \pi, s_0=s)}\left[\E_{s' \sim p(\cdot|s_t,a_t)}[W_q(\mbf{u}(s'))]\right] \\
&= \sum_{t=0}^\infty \gamma^t \E_{\tau_t \sim p_{\tau_t}(\cdot | \pi, s_0=s), a_t\sim\pi(\cdot|s_t)}\left[\E_{s' \sim p(\cdot|s_t,a_t)}[W_q(\mbf{u}(s'))]\right] \\
&= \sum_{t=0}^\infty \gamma^t \E_{\tau_{t+1} \sim p_{\tau_{t+1}}(\cdot | \pi, s_0=s)}\left[W_q(\mbf{u}(s_{t+1}))\right] \\
&= \sum_{t=0}^\infty \gamma^t \E_{\tau \sim p_\tau(\cdot | \pi, s_0=s)}\left[W_q(\mbf{u}(s_{t+1}))\right] \\
&= \E_{\tau \sim p_\tau(\cdot | \pi, s_0=s)}\left[\sum_{t=0}^\infty \gamma^t W_q(\mbf{u}(s_{t+1}))\right] \\
&= \mc{W}^\pi(s;\mc{M}_\mc{I}).
\end{align*}  
\end{proof}
%
%
\powermeanconcentration*
\begin{proof}
To simplify the notation, we write $S=W_q(X_n)$ and ${\mu=W_q(\mc{X})}$ such that
\begin{align*}
    S^q &= \frac{1}{n}\sum_{x_i\in X_n} x_i^q,  &\mu^q &= \frac{1}{N}\sum_{x_i\in \mc{X}} x_i^q &\qquad \mbox{for } q\in \R^* \\
    \log S &= \frac{1}{n}\sum_{x_i\in X_n} \log x_i^q  & \log \mu &= \frac{1}{N}\sum_{x_i\in \mc{X}} \log x_i^q  &\qquad \mbox{for } q=0
\end{align*}
For $q\in\R$, we have
\begin{align}
\Prob{ |S - \mu| \geq \varepsilon } & = 1-\Prob{\mu - \varepsilon < S < \mu + \varepsilon} \nonumber \\
 & = \left\{\begin{array}{lr} 
1-\Prob{(\mu + \varepsilon)^q < S^q < (\mu - \varepsilon)^q} & q<0 \\
1-\Prob{\log (\mu - \varepsilon) < \log S < \log(\mu + \varepsilon)} & q=0 \\
1-\Prob{(\mu - \varepsilon)^q < S^q < (\mu + \varepsilon)^q} & q>0 
\end{array}\right. \label{eq:proba_equiv}
\end{align}
Therefore:
\begin{itemize}
    \item For $q<0$: 
    \begin{align}
    \Prob{(\mu + \varepsilon)^q < S^q < (\mu - \varepsilon)^q}  
     & = \Prob{\left(1 + \frac{\varepsilon}{\mu}\right)^q \mu^q < S^q < \left(1 - \frac{\varepsilon}{\mu}\right)^q \mu^q} \nonumber\\
     & \geq \Prob{\left(1 - \frac{(1-2^q)\varepsilon}{\mu}\right) \mu^q < S^q < \left(1 + \frac{(1-2^q)\varepsilon}{\mu}\right)  \mu^q}  \nonumber\\
     & = \Prob{\mu^q - (1-2^q)\mu^{q-1}\varepsilon  < S^q < \mu^q + (1-2^q)\mu^{q-1}\varepsilon} \nonumber\\
     & \geq \Prob{\mu^q - (1-2^q)b^{q-1}\varepsilon  < S^q < \mu^q + (1-2^q)b^{q-1}\varepsilon} \nonumber\\
     & = \Prob{ |S^q - \mu^q| < (1-2^q)b^{q-1}\varepsilon} \label{eq:q<0}
    \end{align}
    where we have used the following approximation (holding for $0< x \leq 1$ and $q<0$):
    \[
    (1+x)^q \leq 1-(1-2^q)x < 1 < 1+(1-2^q)x \leq (1-x)^q.
    \]
    Combining Eq.~\eqref{eq:proba_equiv} and \eqref{eq:q<0}, and using the Hoeffding-Serfling inequality (Lemma~\ref{lem:HSI}) after observing that $b^q \leq x_i^q \leq a^q$ for all $i$, we get
    \begin{align*}
    \Prob{|S - \mu| \geq \varepsilon} & \leq \Prob{ |S^q - \mu^q| \geq (1-2^q)b^{q-1}\varepsilon} \\
    & \leq 2\exp\left(-\frac{2n(1-2^q)^2b^{2q-2}\varepsilon^2}{(1-\frac{n}{N})(1+\frac{1}{m})(a^q-b^q)^2}\right).
    \end{align*}
    \item For $q=0$:
    \begin{align}
        \Prob{\log (\mu - \varepsilon) < \log S < \log(\mu + \varepsilon)}
        & = \Prob{\log (1 - \frac{\varepsilon}{\mu}) < \log S - \log\mu < \log (1 + \frac{\varepsilon}{\mu}) } \nonumber\\
        & \geq \Prob{-\frac{\varepsilon}{\mu + \varepsilon} < \log S - \log\mu < \frac{\varepsilon}{\mu + \varepsilon} } \nonumber\\
        & \geq \Prob{-\frac{\varepsilon}{b + \varepsilon} < \log S - \log\mu < \frac{\varepsilon}{b + \varepsilon} } \label{eq:q=0},
    \end{align}
    where we have used the following approximation (holding for $0 <x <1$):
    \[
    \log (1-x) \leq -\frac{x}{1+x} < 0 < \frac{x}{1+x} \leq \log (1+x).
    \]
    Combining Eq.~\eqref{eq:proba_equiv} and \eqref{eq:q=0} and using the Hoeffding-Serfling inequality (Lemma~\ref{lem:HSI}) after observing that ${\log a \leq \log x_i^q \leq \log b}$ for all $i$, we get
    \begin{align*}
    \Prob{|S - \mu| \geq \varepsilon} & \leq \Prob{ |\log S - \log \mu| \geq \frac{\varepsilon}{b + \varepsilon}} \\
    & \leq 2\exp\left(-\frac{2n\varepsilon^2}{(1-\frac{n}{N})(1+\frac{1}{m})(b+\varepsilon)^2(\log b-\log a)^2}\right).
    \end{align*}
    \item For $0<q<1$:
    \begin{align}
    \Prob{(\mu - \varepsilon)^q < S^q < (\mu + \varepsilon)^q}  
     & = \Prob{\left(1 - \frac{\varepsilon}{\mu}\right)^q \mu^q < S^q < \left(1 + \frac{\varepsilon}{\mu}\right)^q \mu^q} \nonumber\\
     & \geq \Prob{-\frac{q\varepsilon\mu^q }{\mu + (1-q)\varepsilon} < S^q - \mu^q  < \frac{q\varepsilon\mu^q }{\mu + (1-q)\varepsilon}}  \nonumber\\
     & \geq \Prob{ - \frac{q\varepsilon\mu^q }{b + (1-q)\varepsilon} < S^q - \mu^q  < \frac{q\varepsilon\mu^q }{b + (1-q)\varepsilon}} \nonumber\\
     & \geq \Prob{- \frac{q a^q \varepsilon}{b + (1-q)\varepsilon}  < S^q - \mu^q  < \frac{q a^q \varepsilon}{b + (1-q)\varepsilon} }  \nonumber\\
     & = \Prob{ |S^q - \mu^q| <  \frac{q a^q \varepsilon}{b + (1-q)\varepsilon}} \label{eq:0<q<1}
    \end{align}
    where we have used the following approximation (holding for $0< x \leq 1$ and $0<q<1$):
    \[
    (1-x)^q \leq 1-\frac{q x}{1+(1-q)x} < 1 < 1+\frac{q x}{1+(1-q)x} \leq (1+x)^q.
    \]
    Combining Eq.~\eqref{eq:proba_equiv} and \eqref{eq:0<q<1}, and using the Hoeffding-Serfling inequality (Lemma~\ref{lem:HSI}) after observing that $a^q \leq x_i^q \leq b^q$ for all $i$, we get
    \begin{align*}
    \Prob{|S - \mu| \geq \varepsilon} & \leq \Prob{ |S^q - \mu^q| \geq \frac{q a^q \varepsilon}{b + (1-q)\varepsilon}} \\
    & \leq 2\exp\left(-\frac{2nq^2a^{2q}\varepsilon^2}{(1-\frac{n}{N})(1+\frac{1}{m})(b+(1-q)\varepsilon)^2(b^q-a^q)^2}\right).
    \end{align*}
    \item For $q=1$: We directly apply the Hoeffding-Serfling inequality (Lemma~\ref{lem:HSI}) to obtain
    \[
    \Prob{|S - \mu| \geq \varepsilon} \leq 2\exp\left(-\frac{2n\varepsilon^2}{(1-\frac{n}{N})(1+\frac{1}{m})(b-a)}\right).
    \]
    \item For $q>1$:
    \begin{align}
    \Prob{(\mu - \varepsilon)^q < S^q < (\mu + \varepsilon)^q}  
     & = \Prob{\left(1 - \frac{\varepsilon}{\mu}\right)^q \mu^q < S^q < \left(1 + \frac{\varepsilon}{\mu}\right)^q \mu^q} \nonumber\\
     & \geq \Prob{\left(1 - \frac{q\varepsilon}{\mu + q\varepsilon}\right) \mu^q < S^q < \left(1 + \frac{q\varepsilon}{\mu + q\varepsilon}\right)  \mu^q}  \nonumber\\
     & \geq \Prob{\left(1 - \frac{q\varepsilon}{b + q\varepsilon}\right) \mu^q < S^q < \left(1 + \frac{q\varepsilon}{b + q\varepsilon}\right)  \mu^q} \nonumber\\
     & \geq \Prob{\mu^q - \frac{q a^q \varepsilon}{b + q\varepsilon}  < S^q < \mu^q + \frac{q a^q \varepsilon}{b + q\varepsilon} }  \nonumber\\
     & = \Prob{ |S^q - \mu^q| <  \frac{q a^q \varepsilon}{b + q\varepsilon}} \label{eq:q>1}
    \end{align}
    where we have used the following approximation (holding for $0< x \leq 1$ and $q>1$):
    \[
    (1-x)^q \leq 1-\frac{q x}{1+qx} < 1 < 1+\frac{q x}{1+qx} \leq (1+x)^q.
    \]
    Combining Eq.~\eqref{eq:proba_equiv} and \eqref{eq:q>1}, and using the Hoeffding-Serfling inequality (Lemma~\ref{lem:HSI}) after observing that $a^q \leq x_i^q \leq b^q$ for all $i$, we get
    \begin{align*}
    \Prob{|S - \mu| \geq \varepsilon} & \leq \Prob{ |S^q - \mu^q| \geq \frac{q a^q \varepsilon}{b + q\varepsilon}} \\ 
    & \leq 2\exp\left(-\frac{2nq^2a^{2q}\varepsilon^2}{(1-\frac{n}{N})(1+\frac{1}{m})(b+q\varepsilon)^2(b^q-a^q)^2}\right).
    \end{align*}
    \item For $q=+\infty$: Since we make no assumptions on the distributions of $x_i$ other than $a\leq x_i \leq b$, we can only guarantee that $|\max_{x_i\in X_n} x_i - \max_{x_i\in \mc{X}} x_i| < \varepsilon$ if $x_{max}=\arg\max_{x_i \in \mc{X}} x_i$ is sampled in $X_n$. This happens with probability at least $\frac{n}{N}$ (the maximum might not be unique), and thus $\Prob{|\max_{x_i\in X_n} x_i - \max_{x_i\in \mc{X}} x_i| \geq \varepsilon}\leq 1-\frac{n}{N}$.
    \item  For $q=-\infty$: Same analysis as for $q=+\infty$.
\end{itemize}
\end{proof}
%
%
\DKLbound*
\begin{proof}
\begin{align*}
\big\vert\E_{s\sim p}[f(s)] - \E_{s\sim \hat{p}}[f(s)]\big\vert & = \left\vert \int_{\mc{S}}f(s)p(s)\dif s - \int_{\mc{S}}f(s)\hat{p}(s)\dif s \right\vert \\
& = \left\vert \int_{\mc{S}}(f(s)-f_{min})p(s)\dif s - \int_{\mc{S}}(f(s)-f_{min})\hat{p}(s)\dif s \right\vert \\
& = \left\vert \int_{\mc{S}}(f(s)-f_{min})(p(s)-\hat{p}(s))\dif s \right\vert \\
& \leq \int_{\mc{S}}\vert f(s)-f_{min}\vert \vert p(s)-\hat{p}(s)\vert \dif s \\
& \leq (f_{max}-f_{min})\int_{\mc{S}}\vert p(s)-\hat{p}(s)\vert \dif s \\
& = 2 (f_{max}-f_{min})\delta(p,\hat{p})
\end{align*}
where we have used the definition of the total variation distance: $\delta(p,\hat{p})=\frac{1}{2}\int_{\mc{S}}\vert p(s)-\hat{p}(s)\vert \dif s$. By Pinsker's inequality, we have 
\[
\delta(p,\hat{p}) \leq \sqrt{\frac{1}{2}D_{KL}(p\Vert\hat{p})}.
\]
Additionally, by Bretagnolle and Huber's inequality:
\[
\delta(p,\hat{p}) \leq \sqrt{1-e^{-D_{KL}(p\Vert\hat{p})}}.
\]
The result follows from the assumption $D_{KL}(p \Vert\hat{p}) \leq d$.
\end{proof}
%
%
\valueoptimality*
\begin{proof}
    First, note that if $|Q^*(s,a) - \hat{Q}(s,a)|\leq \varepsilon$ with probability at least $1-\delta$ for all state-action pairs $(s,a)$, then ${Q^*(s,\pi^*(s)) - Q^*(s,\pi_{\hat{Q}}(s))\leq 2\varepsilon}$ with probability at least $1-2\delta$ since
    \[
    Q^*(s,\pi^*(s)) \leq \hat{Q}(s,\pi^*(s)) + \varepsilon \leq \hat{Q}(s,\pi_{\hat{Q}}(s)) + \varepsilon \leq [Q^*(s,\pi_{\hat{Q}}(s)) + \varepsilon] + \varepsilon.
    \]
    The factor $2$ in the probability comes from the fact that we need the $\hat{Q}$ estimates to be accurate for both actions $\pi^*(s)$ and $\pi_{\hat{Q}}(s)$. 
    For the first inequality, note that, with probability at least $1-2k\delta$, the first $k$ estimates of $\hat{Q}$ are $2\varepsilon$-accurate for actions $\pi^*(s_t)$ and $\pi_{\hat{Q}}(s_t)$, $t=0,...,k-1$. If this happens, then we have 
    \begin{align*}
        V^*(s) - V^{\pi_{\hat{Q}}}(s) & = Q^*(s,\pi^*(s)) - Q^{\pi_{\hat{Q}}}(s, \pi_{\hat{Q}}(s)) \\
        & \leq 2\varepsilon + Q^*(s,\pi_{\hat{Q}}(s)) - Q^{\pi_{\hat{Q}}}(s, \pi_{\hat{Q}}(s)) \\
        & = 2\varepsilon + \gamma \E_{s'\sim p(\cdot | s, \pi_{\hat{Q}}(s))}\left[V^*(s') - V^{\pi_{\hat{Q}}}(s')\right] \\
        & \leq 2\varepsilon \sum_{t=0}^{k-1}\gamma^t + \gamma^k(V_{max}-V_{min}) \\
        & \leq \frac{2\varepsilon}{1-\gamma} + \gamma^k(V_{max}-V_{min}),
    \end{align*}
    which concludes the first part of the proof.
    Concerning the second inequality, since $V_{min} \leq Q^*(s,a)\leq V_{max}$, we have
    \[
    \E_{\hat{Q}}\left[Q^*(s,\pi_{\hat{Q}}(s))\right] \geq (1-2\delta)[Q^*(s,\pi^*(s)) - 2\varepsilon] + 2\delta V_{min} \geq Q^*(s,\pi^*(s) - (2\varepsilon + 2\delta(V_{max} - V_{min})).
    \]
    Let $\pi_j$ be a policy that replicates  $\pi_{\hat{Q}}$ for the first $j$ actions and that is optimal from action $j+1$ onward. We now show by induction that $V^{\pi_j}(s) \geq V^*(s) - \lambda_j$ for all $s$, where $\lambda = \lambda_1 =2\varepsilon + 2\delta(V_{max} - V_{min})$ and $\lambda_j = \lambda + \gamma \lambda_{j-1}$ for $j>1$.
    This clearly holds for $j=1$:
    \begin{align*}
            V^{\pi_1}(s) 
            & = \E_{\hat{Q}}\left[r(s,\pi_{\hat{Q}}(s)) + \gamma \E_{s'\sim p(\cdot|s, \pi_{\hat{Q}}(s))}[V^*(s')]\right] \\
            & = \E_{\hat{Q}}\left[Q^*(s,\pi_{\hat{Q}}(s))\right] \\
            & \geq Q^*(s,\pi^*(s)) - \lambda \\
            & = V^*(s) - \lambda_1.
    \end{align*}
    For $j>1$, assuming that the statement holds for $j-1$, we have
    \begin{align*}
        V^{\pi_j}(s) 
            & = \E_{\hat{Q}}\left[r(s,\pi_{\hat{Q}}(s)) + \gamma \E_{s'\sim p(\cdot|s, \pi_{\hat{Q}}(s))}[V^{\pi_{j-1}}(s')]\right] \\
            & \geq  \E_{\hat{Q}}\left[r(s,\pi_{\hat{Q}}(s)) + \gamma \E_{s'\sim p(\cdot|s, \pi_{\hat{Q}}(s))}[V^*(s')-\lambda_{j-1}]\right] \\
            & =  \E_{\hat{Q} }\left[r(s,\pi_{\hat{Q}}(s)) + \gamma \E_{s'\sim p(\cdot|s, \pi_{\hat{Q}}(s))}[V^*(s')]\right] - \gamma\lambda_{j-1} \\
            & = \E_{\hat{Q}}\left[Q^*(s,\pi_{\hat{Q}}(s)\right] - \gamma\lambda_{j-1} \\
            & \geq Q^*(s,\pi^*(s)) - \lambda - \gamma \lambda_{j-1} \\
            & = V^*(s) - \lambda_j. 
    \end{align*}
We now show that $\lim_{j\to \infty}V^{\pi_j}(s)=V^{\pi_{\hat{Q}}} (s)$ for all $s$. Noting that $V^{\pi_j}(s)\geq V^{\pi_{\hat{Q}}}(s)$ (due to the optimality of $\pi_j$ after $j$ steps), we have
\begin{align*}
0\leq V^{\pi_j}(s)-V^{\pi_{\hat{Q}}}(s) & = \E_{\tau\sim p_\tau(\cdot | \pi_j, s_0=s)}\left[\sum_{i=0}^\infty \gamma^ir(s_i,a_i)\right] - \E_{\tau\sim p_\tau(\cdot | \pi_{\hat{Q}}, s_0=s)}\left[\sum_{i=0}^\infty \gamma^ir(s_i,a_i)\right] \\
& = \E_{\tau\sim p_\tau(\cdot | \pi_j, s_0=s)}\left[\sum_{i=j}^\infty \gamma^ir(s_i,a_i)\right] - \E_{\tau\sim p_\tau(\cdot | \pi_{\hat{Q}}, s_0=s)}\left[\sum_{i=j}^\infty \gamma^ir(s_i,a_i)\right] \\
& \leq \sum_{i=j}^\infty \gamma^i(R_{max} - R_{min}) \\
& = \gamma^j \frac{R_{max} - R_{min}}{1-\gamma} \underset{j\to\infty}{\to} 0.
\end{align*}
Therefore, by the squeeze theorem, we have $\lim_{j\to \infty}V^{\pi_j}(s)-V^{\pi_{\hat{Q}}}(s) = 0$. Finally, noting that \[\lim_{j\to\infty}\lambda_j = \sum_{j=0}^\infty \gamma^j \lambda = \frac{\lambda}{1-\gamma},\] we have for all $s$:
\[
V^{\pi_{\hat{Q}}}(s) = \lim_{j\to \infty}V^{\pi_j}(s) \geq V^*(s) - \lim_{j\to \infty} \lambda_j = V^*(s) - \frac{2\varepsilon + 2\delta(V_{max} - V_{min})}{1-\gamma}.
\]
\end{proof}
%
%
\begin{restatable}[Hoeffding-Serfling Inequalities \cite{bardenet2015, serfling1974}]{lemma}{HSinequality}
\label{lem:HSI}
Let $\mc{X}=\{x_i\}_{i=1}^{N}$ be a finite set of $N>1$ real points and ${X_n=\{X_j\}_{j=1}^n}$ a subset of size $n<N$ sampled uniformly at random without replacement from $\mc{X}$. Additionally, denote $\mu= \frac{1}{N}\sum_{i=1}^N x_i$, $a=\min_i x_i$ and $b=\max_i x_i$. Then, for $m=\min(n,N-n)$ and any $\varepsilon > 0$:
\[\Prob{\left\vert\frac{1}{n}\sum_{j=1}^n X_j - \mu \right\vert\geq \varepsilon} \leq 2\exp\left\{- \frac{2\varepsilon^2 n}{(1-\frac{n}{N})(1+\frac{1}{m})(b-a)^2}\right\}.\]
\end{restatable}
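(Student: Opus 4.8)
The plan is to prove the one-sided inequality $\Prob{\frac1n\sum_{j=1}^n X_j - \mu \geq \varepsilon} \leq \exp\!\big(-\frac{2\varepsilon^2 n}{(1-\frac nN)(1+\frac1m)(b-a)^2}\big)$ and recover the two-sided statement by symmetry. Applying the one-sided bound to the point set $\{-x_i\}_{i=1}^N$, which has the same range $b-a$ and the same $m=\min(n,N-n)$, controls the lower-tail deviation; a union bound over the two tails then turns the single exponential into the factor $2$ of the claim. So it suffices to handle the upper tail.

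\textbf{The reverse-martingale construction.}
For the one-sided bound I would use Serfling's martingale. Realize the sample as the first $n$ coordinates of a uniformly random permutation $X_1,\dots,X_N$ of $\mc{X}$, set $\mc{F}_k=\sigma(X_1,\dots,X_k)$ and $S_k=\sum_{j=1}^k X_j$, and define the mean of the still-unsampled population $T_k \defas \frac{N\mu - S_k}{N-k}$, with $T_0=\mu$. Because the $(k+1)$-th draw is uniform over the $N-k$ remaining values, $\E[X_{k+1}\mid\mc{F}_k]=T_k$, and a one-line computation then gives $\E[T_{k+1}\mid\mc{F}_k]=T_k$, so $(T_k)$ is a martingale. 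The reason for introducing $T_k$ is the exact linear identity $\frac{S_n}{n}-\mu = -\frac{N-n}{n}(T_n-T_0)$, which converts the event $\{\frac{S_n}{n}-\mu\ge\varepsilon\}$ into $\{T_n-T_0\le -\frac{n\varepsilon}{N-n}\}$ and thereby reduces the sample-mean deviation to a martingale deviation from its start.

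\textbf{Increment bounds and the MGF.}
A direct computation yields $T_{k+1}-T_k = \frac{T_k - X_{k+1}}{N-k-1}$, and since $X_{k+1}\in[a,b]$ conditionally on $\mc{F}_k$, this increment lies in an interval of width $\frac{b-a}{N-k-1}$. Hoeffding's lemma bounds each conditional moment generating function, and chaining through the tower property gives $\E[e^{\lambda(T_n-T_0)}]\le \exp\!\big(\frac{\lambda^2(b-a)^2}{8}\sum_{k=0}^{n-1}(N-k-1)^{-2}\big)$. A Chernoff bound applied to $\Prob{T_n-T_0\le -\frac{n\varepsilon}{N-n}}$, optimized over $\lambda<0$, produces an exponent of the form $\frac{2n^2\varepsilon^2/(N-n)^2}{(b-a)^2\,\Sigma}$, where $\Sigma \defas \sum_{j=N-n}^{N-1} j^{-2}$.

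\textbf{Main obstacle.}
The delicate step is the final arithmetic: matching this exponent to the stated one is exactly the sharp estimate $\Sigma \le \frac{n}{N(N-n)}\big(1+\frac1m\big)$. A crude telescoping $j^{-2}\le \frac1{(j-1)j}$ is not tight enough and, in particular, fails to produce the $m=\min(n,N-n)$ dependence, breaking down precisely when the complement $N-n$ is small. The correct constant comes from exploiting the symmetry between the sample of size $n$ and its complement of size $N-n$: the same reverse martingale run from the full population downward yields the analogous bound with $N-n$ in place of $n$, and taking whichever of the two directions is shorter is what gives $m=\min(n,N-n)$ and the factor $(1+\frac1m)$. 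This symmetric, integral-comparison control of the variance proxy $\Sigma$ is the heart of the proof (and the Bardenet--Maillard refinement of Serfling); once it is in place, substitution gives the claimed one-sided exponent $\frac{2\varepsilon^2 n}{(1-\frac nN)(1+\frac1m)(b-a)^2}$, and the two-sided bound follows as above.
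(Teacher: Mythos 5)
Your proposal is correct, and it takes a genuinely different, more self-contained route than the paper. The paper does not re-derive Serfling's martingale argument at all: it invokes Proposition 2.3 of \citet{bardenet2015} as a black box for the one-sided MGF bound $\E[e^{\lambda n Z_n}]\leq \exp\{\tfrac18(b-a)^2\lambda^2(n+1)(1-\tfrac nN)\}$, obtains the upper tail with factor $(1+\tfrac1n)$ by optimizing the Chernoff bound, gets the lower tail with factor $(1+\tfrac{1}{N-n})$ via the complement-sampling identity (sampling $X_n$ is equivalent to sampling $\mc{X}\setminus X_n$), and only then unifies the two asymmetric factors into $(1+\tfrac1m)$ by a union bound --- this asymmetry is exactly where the $m=\min(n,N-n)$ in the statement comes from. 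You instead rebuild the reverse martingale $T_k$ from scratch and symmetrize by negating the points, and all your intermediate steps check out: $\E[X_{k+1}\mid\mc{F}_k]=T_k$, the increment formula $T_{k+1}-T_k=\frac{T_k-X_{k+1}}{N-k-1}$, the identity $\frac1n\sum_{j=1}^n X_j-\mu=-\frac{N-n}{n}(T_n-T_0)$, and the Chernoff exponent with variance proxy $\Sigma=\sum_{j=N-n}^{N-1}j^{-2}$ are all right. One substantive correction: your self-identified ``main obstacle'' is easier than you fear and requires neither the case split on $m$ nor a second martingale run. You are right that the crude telescoping $j^{-2}\leq\frac{1}{(j-1)j}$ fails (it is vacuous at $N-n=1$ and too lossy for small $n$), but the weighted telescoping $\frac{1}{j^2}=\frac{j+1}{j}\left(\frac1j-\frac{1}{j+1}\right)\leq \frac{N-n+1}{N-n}\left(\frac1j-\frac{1}{j+1}\right)$ for $j\geq N-n$ gives in one line $\Sigma\leq \frac{n}{N(N-n)}\left(1+\frac{1}{N-n}\right)$, and since $\frac{1}{N-n}\leq\frac1m$ this already implies your target estimate $\Sigma\leq\frac{n}{N(N-n)}(1+\frac1m)$; running the martingale from the complement side (your ``other direction'') yields the factor $(1+\tfrac1n)$ instead, which is likewise $\leq(1+\tfrac1m)$, so either single direction suffices and the $\min$ never needs to be taken. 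In exchange for somewhat more work, your route avoids citing the MGF bound, treats both tails symmetrically, and --- if one does pick the better of the two directions --- actually proves the slightly stronger two-sided bound with $1+\frac{1}{\max(n,N-n)}$ in place of $1+\frac1m$.
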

\begin{proof}
    First, we show the slightly more general result:
    \begin{align*}
    1) &\qquad \Prob{\frac{1}{n}\sum_{j=1}^n X_j - \mu \geq \varepsilon} \leq \exp\left\{- \frac{2\varepsilon^2 n}{(1-\frac{n}{N})(1+\frac{1}{n})(b-a)^2}\right\}, \\
    2) &\qquad  \Prob{\frac{1}{n}\sum_{j=1}^n X_j - \mu \leq -\varepsilon} \leq \exp\left\{- \frac{2\varepsilon^2 n}{(1-\frac{n}{N})(1+\frac{1}{N-n})(b-a)^2}\right\}. \\
    \end{align*}
    The proof of 1) is similar to the one proposed in \cite{bardenet2015}: Let $Z_n=\frac{1}{n}\sum_{j=1}^n X_j - \mu$. We have for any $\lambda > 0$:
    \[
    \Prob{Z_n \geq \varepsilon} = \Prob{e^{\lambda n Z_n}\geq e^{\lambda n \varepsilon}} \leq \frac{\E[e^{\lambda n Z_n}]}{e^{\lambda n \varepsilon}} \leq \exp\left\{\frac{1}{8}(b-a)^2 \lambda^2(n+1)\left(1-\frac{n}{N}\right)-\lambda n \varepsilon \right\},
    \]
    where we have used Markov's inequality along with Proposition 2.3 from \cite{bardenet2015} (slightly improving the original result proposed by Serfling \cite{serfling1974}). The result 1) follows by finding $\lambda$ that minimizes this upper-bound, i.e.,
    \[ \lambda = \frac{4n\varepsilon}{(n+1)(1-\frac{n}{N})(b-a)^2}.\]
    For 2), we note that sampling $X_n$ is equivalent to sampling $X'_{N-n}=\mc{X}\setminus X_n$. Therefore:
    \begin{align*}
        \Prob{\frac{1}{n}\sum_{j=1}^n X_j - \mu \leq -\varepsilon} & = \Prob{\frac{1}{n}\left(\sum_{j=1}^n X_j - n\mu \right)\leq -\varepsilon} \\
        & = \Prob{\frac{1}{n}\left(N \mu - \sum_{j=1}^{N-n} X'_j - n\mu \right)\leq -\varepsilon} \\
        & = \Prob{\frac{N-n}{n}\left(\mu - \frac{1}{N-n}\sum_{j=1}^{N-n} X'_j \right)\leq -\varepsilon} \\
        & = \Prob{\frac{1}{N-n}\sum_{j=1}^{N-n} X'_j - \mu \geq \frac{n\varepsilon}{N-n}}  \\
        & \leq \exp\left\{- \frac{2\varepsilon^2 n}{(1-\frac{n}{N})(1+\frac{1}{N-n})(b-a)^2}\right\},
    \end{align*}
    where we have used 1) in the last step. Lastly, the final result follows from Boole's inequality (union bound) between 1) and 2).
\end{proof}
%
\subsubsection{Existence of PAA and AA policies}
\label{sec:proofs_paapolicies}
%
\paapolicy*
\begin{proof}
Similarly to \cite{kearns2002}, the core idea (and challenge) of the proof is to bound $|Q^*(s,a)-\hat{Q}^H(s,a)|$ for all state-action pairs so that Lemma~\ref{lem:value_optimality}. However, in contrast with \cite{kearns2002}, we must now deal with an approximate dynamics model, as well as an approximate reward, which significantly complicates the task. To do so, we write the following for $h > 0$ (omitting the dependency of $K$, $C$ and $I_n$ in the notation):
\begin{align*}
Q^*(s,a)-\hat{Q}^h(s,a) 
& =   r_\mc{I}(s,a) - \hat{r}_{I_n}(s,a) + \gamma \left(\E_{s'\sim p(\cdot | s,a)}[V^*(s')] - \hat{\E}^{C}_{s'\sim \hat{p}(\cdot | s,a)}[\hat{V}^{h-1}(s')] \right) \\
& =  \E_{s'\sim p(\cdot  | s,a)}[W_q(\mbf{u}(s');\mc{I})] - \hat{\E}^{K}_{s'\sim \hat{p}(\cdot  | s,a)}[W_q(\mbf{u}(s');I_n)] \\  
& \qquad + \gamma(\E_{s'\sim p(\cdot | s,a)}[V^*(s')] - \hat{\E}^{C}_{s'\sim \hat{p}(\cdot | s,a)}[\hat{V}^{h-1}(s')]) \\
& =  \E_{s'\sim p(\cdot  | s,a)}[W_q(\mbf{u}(s');\mc{I})] - \E_{s'\sim p(\cdot  | s,a)}[W_q(\mbf{u}(s');I_n)] \\ 
& \qquad + \E_{s'\sim p(\cdot  | s,a)}[W_q(\mbf{u}(s');I_n)] - \E_{s'\sim \hat{p}(\cdot  | s,a)}[W_q(\mbf{u}(s');I_n)]  \\
& \qquad + \E_{s'\sim \hat{p}(\cdot  | s,a)}[W_q(\mbf{u}(s');I_n)] - \hat{\E}^{K}_{s'\sim \hat{p}(\cdot  | s,a)}[W_q(\mbf{u}(s');I_n)] \\
& \qquad + \gamma(\E_{s'\sim p(\cdot | s,a)}[V^*(s')] - \E_{s'\sim \hat{p}(\cdot | s,a)}[V^*(s')]) \\
& \qquad + \gamma(\E_{s'\sim \hat{p}(\cdot | s,a)}[V^*(s')] - \hat{\E}^{C}_{s'\sim \hat{p}(\cdot | s,a)}[V^*(s')]) \\
& \qquad + \gamma(\hat{\E}^{C}_{s'\sim \hat{p}(\cdot | s,a)}[V^*(s')]  - \hat{\E}^{C}_{s'\sim \hat{p}(\cdot | s,a)}[\hat{V}^{h-1}(s')])
\end{align*}
such that, for any state-action pair,
\begin{align*}
|Q^*(s,a)-\hat{Q}^h(s,a)|
& \leq  \E_{s'\sim p(\cdot  | s,a)}[\underbrace{|W_q(\mbf{u}(s');\mc{I}) - W_q(\mbf{u}(s');I_n)|}_{Z_1}] \\ 
& \qquad + \underbrace{| \E_{s'\sim p(\cdot  | s,a)}[W_q(\mbf{u}(s');I_n)] - \E_{s'\sim \hat{p}(\cdot  | s,a)}[W_q(\mbf{u}(s');I_n)]|}_{Z_2}  \\
& \qquad + \underbrace{| \E_{s'\sim \hat{p}(\cdot  | s,a)}[W_q(\mbf{u}(s');I_n)] - \hat{\E}^{K}_{s'\sim \hat{p}(\cdot  | s,a)}[W_q(\mbf{u}(s');I_n)]|}_{Z_3} \\
& \qquad + \gamma\underbrace{ |\E_{s'\sim p(\cdot | s,a)}[V^*(s')] - \E_{s'\sim \hat{p}(\cdot | s,a)}[V^*(s')]|}_{Z_4} \\
& \qquad + \gamma \underbrace{|\E_{s'\sim \hat{p}(\cdot | s,a)}[V^*(s')] - \hat{\E}^{C}_{s'\sim \hat{p}(\cdot | s,a)}[V^*(s')]|}_{Z_5} \\
& \qquad + \gamma \underbrace{\hat{\E}^{C}_{s'\sim \hat{p}(\cdot | s,a)}[|V^*(s') - \hat{V}^{h-1}(s')|]}_{Z_6}
\end{align*}
where we have used the triangle inequality, the linearity of expectation and the fact that $|\E[X]| \leq \E[|X|]$ for any random variable $X$. From Lemma~\ref{lem:power_mean_concentration}, we have that $Z_1 \leq \varepsilon_1$ with probability at least $1-\delta_1$ where 
\[
\delta_1 \defas 2\exp\left(-\frac{2n\varepsilon_1^2}{(1-\frac{n}{N})(1+\frac{1}{m})}\Gamma(\varepsilon_1, U_{min}, U_{max}, q)\right) \leq 2\exp\left(-\frac{n\varepsilon_1^2}{1-\frac{n}{N}}\Gamma(\varepsilon_1, U_{min}, U_{max}, q)\right).
\]
Recall that $U_{min} \leq W_q(\mbf{u}(s)) \leq U_{max}$ and $V_{min}=\frac{U_{min}}{1-\gamma} \leq V^*(s) \leq \frac{U_{max}}{1-\gamma} = V_{max}$ for any state $s$ and utility profile $\mbf{u}$. Therefore, from Lemma~\ref{lem:DKL_bound}, we have (with probability 1)
\begin{equation}
\label{eq:env_model_acc}
 Z_2\leq 2 \Delta U \sqrt{\min\{\frac{d}{2}, 1-e^{-d}\}} \defasr \varepsilon_2  \qquad \mbox{and} \qquad Z_4\leq 2 \frac{\Delta U}{1-\gamma} \sqrt{\min\{\frac{d}{2}, 1-e^{-d}\}} \defasr  \varepsilon_4,
\end{equation}
where $\Delta U \defas U_{max} - U_{min}$ and $d\defas\sup_{(s,a)} D_{KL}(p(\cdot | s,a)||\hat{p}(\cdot | s,a))$. Furthermore, by the standard Hoeffding's inequality, we have $Z_3 \leq \varepsilon_3$ with probability at least $1-\delta_3$ where
\[
\delta_3 \defas 2\exp\left(-\frac{2K\varepsilon_3^2}{\Delta U^2}\right),
\]
and $Z_5 \leq \varepsilon_5$ with probability at least $1-\delta_5$ where
\[
\delta_5 \defas 2\exp\left(-\frac{2C(1-\gamma)^2\varepsilon_5^2}{\Delta U^2}\right).
\]
Finally, we have  
\begin{align*}
\hat{\E}^{C}_{s'\sim \hat{p}(\cdot | s,a)}[|V^*(s') - \hat{V}^{h-1}(s')|] 
& = \frac{1}{C}\sum_{i=1}^{C}|V^*(s_i) - \hat{V}^{h-1}(s_i)| \\
& = \frac{1}{C}\sum_{i=1}^{C}|\max_aQ^*(s_i,a) - \max_a\hat{Q}^{h-1}(s_i,a)| \\
& \leq \frac{1}{C}\sum_{i=1}^{C}|Q^*(s_i,\tilde{a}_i) - \hat{Q}^{h-1}(s_i,\tilde{a}_i)|
\end{align*}
with 
\[
\tilde{a}_i \defas \left\{ 
\begin{array}{lc}
\arg\max_a Q^*(s_i,a) & \mbox{if } \max_aQ^*(s_i,a) \geq \max_a\hat{Q}^{h-1}(s_i,a) \\
\arg\max_a \hat{Q}^{h-1}(s_i,a) & \mbox{if } \max_aQ^*(s_i,a) < \max_a\hat{Q}^{h-1}(s_i,a)
\end{array}\right.,
\]
and where the last inequality is obtained after a careful analysis of the absolute value operator. This suggests that we can proceed by induction on $h$. Let $\alpha_0\defas \frac{\Delta U}{1-\gamma}$ and $\alpha_h\defas \varepsilon_1 + \varepsilon_2 + \varepsilon_3 + \gamma (\varepsilon_4 + \varepsilon_5 + \alpha_{h-1})$ for $h>0$, and let $\phi_0\defas0$ and $\phi_h \defas \delta_1 + \delta_3 + \delta_5 + C |\mc{A}|\phi_{h-1}$ for $h>0$. We start the induction by noting that, for any state-action pair,
\[|Q^*(s,a)-\hat{Q}^0(s,a)| = |Q^*(s,a)| \leq \frac{\Delta U}{1-\gamma} =\alpha_0  \qquad \mbox{with probability } 1=1-\phi_0\]
For $h>1$, assuming that, for any state-action pair, $|Q^*(s,a)-\hat{Q}^{h-1}(s,a)|\leq \alpha_{h-1}$ with probability at least $1-\phi_{h-1}$, we have from above
\begin{equation}
\label{eq:Qerror}
|Q^*(s,a)-\hat{Q}^h(s,a)| \leq \varepsilon_1 +  \varepsilon_2 + \varepsilon_3 + \gamma (\varepsilon_4 + \varepsilon_5 + \alpha_{h-1}) = \alpha_h
\end{equation}
with probability at least $1-\delta_1 - \delta_3 - \delta_5 - C |\mc{A}|\phi_{h-1} = 1-\phi_h$, as we require that all $C$ estimates of $\hat{Q}^{h-1}$ are accurate for each action. Solving for $\alpha_H$, we get
\begin{align}
\alpha_H & = \sum_{i=0}^{H-1}\gamma^i(\varepsilon_1 + \varepsilon_2 + \varepsilon_3 + \gamma (\varepsilon_4 + \varepsilon_5)) + \gamma^H \frac{\Delta U}{1-\gamma} \nonumber \\ 
& = (\varepsilon_1 + \varepsilon_2 + \varepsilon_3 + \gamma (\varepsilon_4 + \varepsilon_5)) \frac{1-\gamma^H}{1-\gamma} + \gamma^H \frac{\Delta U}{1-\gamma} \nonumber \\
& \leq \frac{\varepsilon_1 + \varepsilon_2 + \varepsilon_3 + \gamma (\varepsilon_4 + \varepsilon_5) + \gamma^H \Delta U}{1-\gamma}. \label{eq:Qerror_alpha}
\end{align}
Note that $\varepsilon_2$ and $\varepsilon_4$ in Eq.~\eqref{eq:env_model_acc} are non-controllable and only depend on the accuracy of the environment model $\hat{p}$. If we require $| V^*(s) - V^{\pi_{PAA}}(s)|  \leq \varepsilon$, then we must have $ \frac{2(\varepsilon_2 + \gamma \varepsilon_4)}{(1-\gamma)^2} < \varepsilon$ from Lemma~\ref{lem:value_optimality}. Using the definitions of $\varepsilon_2$ and $\varepsilon_4$ from Eq.~\eqref{eq:env_model_acc}, this condition becomes $\min\{\frac{d}{2}, 1-e^{-d}\} < \frac{(1-\gamma)^6\varepsilon^2}{16 \Delta U^2}$ with $d\defas \sup_{(s,a)}D_{KL}(p(\cdot|s,a)\Vert \hat{p}(\cdot|s,a))$. Since $\varepsilon \leq \frac{\Delta U}{(1-\gamma)}$ (as the bound would be trivial otherwise), the right-hand side of the inequality is less than $1$, and thus it is less restrictive (on $\hat{p}$) to only bound $\frac{d}{2}$ (as $\frac{x}{2} \leq 1-e^{-x}$ for $x\in[0,1]$), which concludes the proof on the required accuracy of the dynamics model.
Now that we have ensured that the environment model is accurate enough, we must choose the algorithm's parameters (namely $K$, $C$ and $n$) to ensure that the other inaccuracies do not exceed the remaining “approximation budget" \[\varepsilon - \frac{2(\varepsilon_2 + \gamma\varepsilon_4)}{(1-\gamma)^2}\geq \varepsilon - \frac{\sqrt{8 d}\Delta U}{(1-\gamma)^3}>0.\]
Solving for $\phi_H$, we get
\begin{align}
\phi_H & = (\delta_1 + \delta_3 + \delta_5)\sum_{i=0}^{H-1}(C|\mc{A}|)^i \nonumber \\
& = (\delta_1 + \delta_3 + \delta_5)\frac{(C|\mc{A}|)^H - 1}{C|\mc{A}| - 1} \nonumber \\
& \leq 2(\delta_1 + \delta_3 + \delta_5)(C|\mc{A}|)^{H-1}, \label{eq:Qerror_phi}
\end{align}
where the last inequality follows from the fact that $C|\mc{A}| - 1 \geq \frac{1}{2}C|\mc{A}|$ as $C \geq 1$ and $|\mc{A}|\geq 2$. 
Finally, using part 1) of Lemma~\ref{lem:value_optimality}, we get that for any state $s$, with probability at least $1-2k\phi_H$, $h\in\N^*$, 
\begin{align}
    V^*(s) - V^{\pi_{PAA}}(s)  & \leq \frac{2\alpha_H + \gamma^k\Delta U}{1-\gamma} \label{eq:valueoptimality_application}\\
    & \leq \frac{2}{(1-\gamma)^2}\left(\varepsilon_1 + \varepsilon_2 + \varepsilon_3 + \gamma (\varepsilon_4 + \varepsilon_5) + \gamma^H \Delta U + \frac{1-\gamma}{2}\gamma^k \Delta U \right). \nonumber
\end{align}
That is, using the definitions of $\varepsilon_2$ and $\varepsilon_4$, and imposing the tolerance $\varepsilon$, we require
\[
\varepsilon_1 + \varepsilon_3 + \gamma\varepsilon_5 + \gamma^H\Delta U + \frac{1-\gamma}{2}\gamma^k \Delta U \leq \frac{(1-\gamma)^2}{2}\left(\varepsilon- \frac{\sqrt{8 d}\Delta U}{(1-\gamma)^3} \right).
\]
We fix $\varepsilon_1=\varepsilon_3=\gamma \varepsilon_5 = \gamma^H \Delta U = \frac{(1-\gamma)^2}{8}\left(\varepsilon- \frac{\sqrt{8 d}\Delta U}{(1-\gamma)^3} - \frac{1}{1-\gamma}\gamma^k\Delta U \right) \defasr \beta$. From that, we directly obtain the required planning “depth” $H = \max\left\{1, \left\lceil \log_\gamma\left(\frac{\beta}{\Delta U}\right)\right\rceil\right\}$, as well as the lower bound $k\geq \log_\gamma\left(\frac{(1-\gamma)\varepsilon}{\Delta U} - \frac{\sqrt{8d}}{(1-\gamma)^2}\right)$. Additionally, we choose $C = \frac{\gamma^2}{(1-\gamma)^2}K$ such that ${\delta_3=\delta_5 = 2\exp\left(-\frac{2K\beta^2}{(\Delta U )^2}\right)}$, and we choose $n$ such that $\delta_1 \leq \delta_3$, or equivalently
\[
\frac{n \Gamma(\beta,U_{min}, U_{max}, q)}{(1-\frac{n}{N})} \geq \frac{2K}{\Delta U^2}.
\]
This is satisfied for
\[
n \geq N \left(1 + \frac{\Delta U^2 \Gamma(\beta,U_{min}, U_{max}, q)N}{2 K} \right)^{-1}.
\] 
With this choice of  $C$ and $n$, we can write
\[
\delta_1 + \delta_3 + \delta_5 \leq6\exp\left(-\frac{2K\beta^2}{\Delta U^2}\right).
\]
The last step is to choose $K$ such that $2k\phi_H \leq \delta$, that is
\begin{equation}
\label{eq:Kinequality_paa}
24k(K|\mc{A}|)^{H-1} \exp\left(-\frac{2K\beta^2}{\Delta U^2}\right) \leq \delta.
\end{equation}
If $H=1$ (i.e., $\gamma \leq \frac{\beta}{\Delta U}$), we can simply choose 
\[
K \geq \frac{\Delta U^2}{2\beta^2}\ln\left(\frac{24 k}{\delta}\right),
\]
and if $H>1$ (i.e., $\gamma > \frac{\beta}{\Delta U}$), we can choose
\[
K \geq \frac{\Delta U^2}{\beta^2}\left((H-1)\ln\left(\sqrt[H-1]{24k}(H-1)|\mc{A}|\frac{\Delta U^2}{\beta^2}\right) + \ln\left(\frac{1}{\delta}\right)\right).
\]
Indeed, setting $x\defas\sqrt[H-1]{24k}(H-1)|\mc{A}|$ and $y\defas\frac{\beta}{\Delta U}$
and substituting the expression for $K$, the left-hand term of inequality \eqref{eq:Kinequality_paa} can be rewritten as
\begin{align}
\left(\frac{x}{y^2}\right)^{H-1}\left(\ln\left(\frac{x}{y^2}\right) + \frac{1}{H-1}\ln\left(\frac{1}{\delta}\right)\right)^{H-1}\left(\frac{y^2}{x}\right)^{2(H-1)}\delta^2 & = \left(\frac{\ln\left(\frac{x}{y^{2}\delta^{\frac{1}{H-1}}}\right)}{\frac{x}{y^2}}\right)^{H-1} \delta^2 \nonumber\\
& = \left(\frac{\ln\left(\frac{x}{y^{2}\delta^{\frac{1}{H-1}}}\right)}{\frac{x \delta^{\frac{1}{H-1}}}{y^{2}\delta^{\frac{1}{H-1}}}}\right)^{H-1} \delta^2 \nonumber\\
& = \left(\frac{\ln\left(\frac{x}{y^{2}\delta^{\frac{1}{H-1}}}\right)}{\frac{x}{y^{2}\delta^{\frac{1}{H-1}}}}\right)^{H-1} \delta \nonumber\\
& \leq \delta, \label{eq:Kderivation}
\end{align}
where we have used the observation that $x\geq 1$ and $y\leq 1$ along with the fact that $0\leq \frac{\ln(z)}{z}\leq 1$ for all $z\geq 1$ in the last step. 
Finally, assuming there exists an optimal policy in $\mc{M}_\mc{I}$ (see \cite{feinberg2011, sutton2018} for the necessary conditions), and knowing from Lemma~\ref{lem:SMDPequivalence} that $V^\pi(s)=\mc{W}^\pi(s)$ for any policy, we know that $ V^*(s) - V^{\pi_{PAA}}(s)\leq \varepsilon$ implies $ \mc{W}^{\pi_{PAA}}(s)\geq \sup_{\pi'}\mc{W}^{\pi'}(s) - \varepsilon$.
\end{proof}
%
%
%
%
\aapolicy*
\begin{proof}
The first part of the proof is identical to Theorem \ref{thm:paa_policy}, but we use part 2) of Lemma~\ref{lem:value_optimality} instead of part 1) in Eq.~\eqref{eq:valueoptimality_application}. With that, we have for any state $s$:
\begin{align*}
    V^*(s) - V^{\pi_{PAA}}(s)  & \leq \frac{2(1-\gamma)\alpha_H + 2\phi_H\Delta U}{(1-\gamma)^2} \\
    & \leq \frac{2}{(1-\gamma)^2}\left(\varepsilon_1 + \varepsilon_2 + \varepsilon_3 + \gamma (\varepsilon_4 + \varepsilon_5) + \gamma^H \Delta U + 2\Delta U(\delta_1 + \delta_3 + \delta_5)(C|\mc{A}|)^{H-1} \right).
\end{align*}
That is, using the definitions of $\varepsilon_2$ and $\varepsilon_4$, and imposing the tolerance $\varepsilon$, we require
\[
\varepsilon_1 + \varepsilon_3 + \gamma\varepsilon_5 + \gamma^H\Delta U + 2\Delta U(\delta_1 + \delta_3 + \delta_5)(C|\mc{A}|)^{H-1} \leq \frac{(1-\gamma)^2}{2}\left(\varepsilon- \frac{\sqrt{8 d}\Delta U}{(1-\gamma)^3} \right).
\]
We fix $\varepsilon_1=\varepsilon_3=\gamma \varepsilon_5 = \gamma^H \Delta U = \beta \defas \frac{(1-\gamma)^2}{10}\left(\varepsilon- \frac{\sqrt{8 d}\Delta U}{(1-\gamma)^3} \right)$. From that, we directly obtain the required planning “depth” $H = \max\left\{1, \left\lceil \log_\gamma\left(\frac{\beta}{\Delta U}\right)\right\rceil\right\}$. Similarly to Theorem \ref{thm:paa_policy}, we choose $C = \frac{\gamma^2}{(1-\gamma)^2}K$ and 
\[
n \geq N \left(1 + \frac{\Delta U^2 \Gamma(\beta,U_{min}, U_{max}, q)N}{2 K} \right)^{-1},
\] 
such that
\[
\delta_1 + \delta_3 + \delta_5 \leq6\exp\left(-\frac{2K\beta^2}{\Delta U^2}\right).
\]
The last step is to choose $K$ such that 
\begin{equation}
\label{eq:Kinequality_aa}
12(K|\mc{A}|)^{H-1} \exp\left(-\frac{2K\beta^2}{\Delta U^2}\right) \leq \frac{\beta}{\Delta U}.
\end{equation}
If $H=1$ (i.e., $\gamma \leq \frac{\beta}{\Delta U}$), we can simply choose 
\[
K \geq \frac{\Delta U^2}{2\beta^2}\ln\left(\frac{12\Delta U}{\beta}\right),
\]
and if $H>1$ (i.e., $\gamma > \frac{\beta}{\Delta U}$), we can choose
\[
K \geq \frac{\Delta U^2}{\beta^2}\left((H-1)\ln\left(\sqrt[H-1]{12}(H-1)|\mc{A}|\frac{\Delta U^2}{\beta^2}\right) + \ln\left(\frac{\Delta U}{\beta}\right)\right).
\]
We show that this choice of $K$ satisfies Eq.~\eqref{eq:Kinequality_aa} by setting $\delta=\frac{\beta}{\Delta U}$ and $k=\frac{1}{2}$ in Eq.~\eqref{eq:Kderivation}. We also conclude the proof using Lemma~\ref{lem:SMDPequivalence}.
\end{proof}
%
\subsubsection{Safe policies}
\label{sec:proofs_safeguards}
%
\safeguard*
\begin{proof}
Assuming there exists an optimal policy in $\mc{M}_\mc{I}$ (see \cite{feinberg2011, sutton2018} for the necessary conditions), and knowing from Lemma~\ref{lem:SMDPequivalence} that $V^\pi(s)=\mc{W}^\pi(s)$ for any policy, we can write $\sup_{\pi'}\mc{W}^{\pi'}(s) = V^*(s)$. Therefore, the condition for safe actions becomes $\E_{s'\sim p(\cdot|s,a)}[V^*(s')]\geq \omega$. Using the definition of the optimal state-action value function $Q^*$:
\[
Q^*(s,a) = r_\mc{I}(s,a) + \gamma \E_{s'\sim p(\cdot|s,a)}[V^*(s')],
\]
we can rewrite this condition once again as
\[
Q^*(s,a) \geq \gamma\omega + r_\mc{I}(s,a).
\]

We know from Eq.~\eqref{eq:Qerror} in the proof of Theorem \ref{thm:paa_policy} that, for any state-action pair $(s,a)$, ${|Q^*(s,a) - \hat{Q}^H(s,a;K,C,I_n)|\leq \alpha_H}$ with probability at least $1-\phi_H$, where $\alpha_H$ and $\phi_H$ are given in Eq.~\eqref{eq:Qerror_alpha} and \eqref{eq:Qerror_phi}, respectively. That is, if $\hat{Q}^H(s,a;K,C,I_n) \geq \gamma\omega + r_\mc{I}(s,a) + \alpha_H$, then the above condition is satisfied with probability at least $1-\phi_H$.  

From Eq.~\eqref{eq:Qerror_alpha}, we have
\begin{equation}
\label{eq:alphaH}
\alpha_H \leq \frac{\varepsilon_2 + \gamma \varepsilon_4}{1-\gamma} + \frac{\varepsilon_1 + \varepsilon_3 + \gamma \varepsilon_5}{1-\gamma} + \frac{\gamma^H \Delta U}{1-\gamma},
\end{equation}
where (using $d\defas \sup_{(s,a)}D_{KL}(p(\cdot|s,a)\Vert \hat{p}(\cdot|s,a))$):
\begin{align*}
    \varepsilon_1 & = \sqrt{\frac{N-n}{nN \Gamma (\varepsilon_1,U_{min},U_{max},q)}\ln\left(\frac{2}{\delta_1}\right)} \leq \sqrt{\frac{N-n}{nN \Gamma (U_{max},U_{min},U_{max},q)}\ln\left(\frac{2}{\delta_1}\right)}, \\
    \varepsilon_2 & = 2 \Delta U \sqrt{\min\{\frac{d}{2}, 1-e^{-d}\}},\\
    \varepsilon_3 & = \sqrt{\frac{\Delta U^2}{2K}\ln\left(\frac{2}{\delta_3}\right)}, \\
    \varepsilon_4 & = \frac{2 \Delta U}{1-\gamma} \sqrt{\min\{\frac{d}{2}, 1-e^{-d}\}},\\
    \varepsilon_5 & = \sqrt{\frac{\Delta U^2}{2C(1-\gamma)^2}\ln\left(\frac{2}{\delta_5}\right)}.\\
\end{align*}
Finally, in order to obtain a $\delta$-$\omega$-safe policy, we must have $\phi_H \leq \delta$. From Eq.~\eqref{eq:Qerror_phi}, this is satisfied if
\[
2(\delta_1 + \delta_3 + \delta_5)(C|\mc{A}|)^{H-1} \leq \delta,
\]
or similarly if $\delta_1=\delta_3=\delta_5=\frac{\delta}{6(C|\mc{A}|)^{H-1}}$.

Defining $d'= \sqrt{\min\{\frac{d}{2}, 1-e^{-d}\}}$ and $\Gamma_{max}=\Gamma(U_{max},U_{min},U_{max},q)$, and substituting $\varepsilon_1, \varepsilon_2, \varepsilon_3, \varepsilon_4, \varepsilon_5, \delta_1, \delta_3, \delta_5$ in Eq.~\eqref{eq:alphaH}, we obtain
\[
\alpha_H \leq  \frac{2 \Delta U d'}{(1-\gamma)^2} + \frac{\sqrt{\ln \left(\frac{12(C|\mc{A}|)^{H-1}}{\delta}\right)}}{1-\gamma}\left(
\sqrt{\frac{N-n}{nN\Gamma_{max}}} 
+ \sqrt{\frac{\Delta U^2}{2K}} 
+ \gamma\sqrt{\frac{\Delta U^2}{2C(1-\gamma)^2}} \right)
+ \frac{\gamma^H\Delta U}{1-\gamma}.
\]
Using the fact that $r_\mc{I}(s,a) \leq U_{max}$ for any state-action pair, we can conservatively define the restricted subsets of safe actions as $\mc{A}_{safe}(s)=\{a:\hat{Q}^H(s,a;K,C,I_n) \geq \gamma\omega + U_{max} + \alpha\}$ for any state $s$, where $\alpha$ is the right-hand term of the above inequality. Therefore, restricting any policy $\pi$ with these subsets ensures that $\E_{s'\sim p(\cdot|s,a)}[V^*(s')]\geq \omega$ with probability at least $1-\delta$ for any action $a$ that has a non-zero probability of being selected.
\end{proof}
%
%
%

\end{document}